\newcolumntype{n}{>{\columncolor{blue!5}}c}
\Crefname{equation}{Eq.}{Eqs.}
\Crefname{figure}{Fig.}{Figs.}
\Crefname{tabular}{Tab.}{Tabs.}
\crefname{algocf}{alg.}{algs.}
\Crefname{algocf}{Algorithm}{Algorithms}
\Crefname{fct}{Fact}{Facts }
\newcommand\greensout{\bgroup\markoverwith{\textcolor{green!40!black}{\rule[0.5ex]{2pt}{1pt}}}\ULon}
\newcommand{\stkout}[1]{\ifmmode\text{\greensout{\ensuremath{#1}}}\else\greensout{#1}\fi}
\theoremstyle{plain}% default
\newtheorem{theorem}{Theorem}[section] 
\newtheorem{lemma}{Lemma}[section] 
\newtheorem{remark}{Remark}[section]
\newtheorem{assumption}{Assumption}[section]
\theoremstyle{definition}
\newcommand{\ie}{i.e., }
\tikzstyle{block} = [draw,rectangle,thick,minimum height=2em,minimum width=2em]
\tikzstyle{sum} = [draw,circle,inner sep=0mm,minimum size=2mm]
\tikzstyle{connector} = [->,thick]
\tikzstyle{line} = [thick]
\tikzstyle{branch} = [circle,inner sep=0pt,minimum size=1mm,fill=black,draw=black]
\tikzstyle{guide} = []
\tikzset{>=latex}
\newcommand{\x}{{\bm{x}}}
\newcommand{\y}{{\bm{y}}}
\newcommand{\z}{{\bm{z}}}
\newcommand{\X}{{\bm{X}}}
\newcommand{\Y}{{\bm{Y}}}
\newcommand{\Z}{{\bm{Z}}}
\newcommand{\s}{{\bm{s}}}
\newcommand{\Ro}{\mathbb{R}}
\newcommand{\dby}{\Omega_Y}
\newcommand{\lip}{\text{Lip}}
\newcommand{\lipx}{\text{Lip}_X}
\newcommand{\lipxe}{\text{Lip}^\prime_X}
\newcommand{\loss}{\mathcal{L}}     	 
\newcommand{\g}{{\bm{g}}}  
\newcommand{\Nz}{{N_Z}}
\newcommand{\Nx}{{N_X}}
\newcommand{\Ny}{{N_Y}}
\newcommand{\ud}{\text{d}}
\newcommand{\argmax}[1]{\underset{#1}{\text{argmax}} \ }    
\newcommand{\argmin}[1]{\underset{#1}{\text{argmin}} \ }
\newcommand{\ex}[2]{\mathbb{E}_{#1}\left[ #2\right]}
\newcommand{\RNum}[1]{\uppercase\expandafter{\romannumeral #1\relax}}
\renewcommand{\mathbf}[1]{\bm{#1}}
\begin{document}

\begin{frontmatter}

%% Title, authors and addresses

%% use the tnoteref command within \title for footnotes;
%% use the tnotetext command for theassociated footnote;
%% use the fnref command within \author or \address for footnotes;
%% use the fntext command for theassociated footnote;
%% use the corref command within \author for corresponding author footnotes;
%% use the cortext command for theassociated footnote;
%% use the ead command for the email address,
%% and the form \ead[url] for the home page:
%% \title{Title\tnoteref{label1}}
%% \tnotetext[label1]{}
%% \author{Name\corref{cor1}\fnref{label2}}
%% \ead{email address}
%% \ead[url]{home page}
%% \fntext[label2]{}
%% \cortext[cor1]{}
%% \affiliation{organization={},
%%             addressline={},
%%             city={},
%%             postcode={},
%%             state={},
%%             country={}}
%% \fntext[label3]{}

\title{Solution of physics-based inverse problems using conditional generative adversarial networks with full gradient penalty}

%% use optional labels to link authors explicitly to addresses:
%% \author[label1,label2]{}
%% \affiliation[label1]{organization={},
%%             addressline={},
%%             city={},
%%             postcode={},
%%             state={},
%%             country={}}
%%
%% \affiliation[label2]{organization={},
%%             addressline={},
%%             city={},
%%             postcode={},
%%             state={},
%%             country={}}

\author[label1]{Deep Ray}
\author[label2]{Javier Murgoitio-Esandi}
\author[label2]{Agnimitra Dasgupta}
\author[label2]{Assad A. Oberai}

\affiliation[label1]{organization={{Department of Mathematics,
			University of Maryland}},
	city={College Park},
	postcode={20742}, 
	state={Maryland},
	country={USA}}
\affiliation[label2]{organization={{Aerospace and Mechanical Engineering Department,
		University of Southern California}},
        city={Los Angeles},
        postcode={90089}, 
        state={California},
        country={USA}}

\begin{abstract}
%% Text of abstract
The solution of probabilistic inverse problems for which the corresponding forward problem is constrained by physical principles is challenging. This is especially true if the dimension of the inferred vector is large and the prior information about it is in the form of a collection of samples. In this work, a novel deep learning based approach is developed and applied to solving these types of problems. The approach utilizes samples of the inferred vector drawn from the prior distribution and a physics-based forward model to generate training data for a conditional Wasserstein generative adversarial network (cWGAN). The cWGAN learns the probability distribution for the inferred vector conditioned on the measurement and produces samples from this distribution. The cWGAN developed in this work differs from earlier versions in that its critic is required to be 1-Lipschitz with respect to both the inferred and the measurement vectors and not just the former. This leads to a loss term with the full (and not partial) gradient penalty. It is shown that this rather simple change leads to a stronger notion of convergence for the conditional density learned by the cWGAN and a more robust and accurate sampling strategy. Through numerical examples it is shown that this change also translates to better accuracy when solving inverse problems. The numerical examples considered include illustrative problems where the true distribution and/or statistics are known, and a more complex inverse problem motivated by applications in biomechanics.

\end{abstract}

%Graphical abstract
%\begin{graphicalabstract}
%\includegraphics{grabs}
%\end{graphicalabstract}

%Research highlights
%\begin{highlights}
%\item A novel formulation of the conditional Wasserstein Generative Adversarial Network for solving physics-constrained inverse problems. 
%\item Theoretical analysis of the convergence properties of the network. 
%\item Better performance of the proposed network than its predecessor on illustrative examples.  
%\item Application to a challenging inverse problem in biomechanics.
%
%\end{highlights}

\begin{keyword}
%% keywords here, in the form: keyword \sep keyword
PDE-based inverse problems \sep Bayesian inference \sep conditional generative adversarial networks  \sep generative models \sep deep learning \sep uncertainty quantification

%% PACS codes here, in the form: \PACS code \sep code

%% MSC codes here, in the form: \MSC code \sep code
%% or \MSC[2008] code \sep code (2000 is the default)

\end{keyword}

\end{frontmatter}

%% \linenumbers
%% main text
\section{Introduction}
\label{sec:introduction}
%\linenumbers

Inverse problems arise in many fields of engineering and science. Typically, they involve recovering the input fields or parameters in a system given noisy measurements of its response. In contrast to direct problems, where one wishes to determine the response given the input parameters, inverse problems are notoriously hard to solve. This is because they are often ill-posed in the sense of Hadamard \cite{hadamard1902problemes} and involve measurements that are corrupted by noise, while their solution often requires injecting some prior information or beliefs regarding the inferred parameters or fields. 

%Inverse problems arise across many fields of engineering and science. Typically, they involve recovering the inputs to, or parameters characterizing, a system from noisy measurements of its response. In contrast to direct problems, where the goal is to determine the response given the input parameters, inverse problems are notoriously hard to solve, not least because the measurements are corrupted by noise. Inverse problems are often ill-posed in the sense of Hadamard \cite{hadamard1902problemes} and their solution often requires injecting some prior information or beliefs regarding the inputs to the system or its parameters. 

The challenges described above can be addressed by working with a probabilistic version of the inverse problem where the input and response vectors/fields are treated as random vectors /processes. In this manuscript, we refer to the input vector as the vector of the parameters to be inferred, or the inferred vector for short. 
In the probabilistic framework, the solution of an inverse problem transforms to characterizing the conditional probability distribution of the inferred vector conditioned on a measurement of the response; this  distribution is known as the posterior distribution. Using Bayes' rule, the posterior distribution can be expressed (up to a multiplicative constant) as the product of a marginal distribution for the inferred vector with the conditional distribution for the measurement given the inferred parameters \cite{dashti2017bayesian}. The former is replaced by a prior distribution for the parameters to be inferred, and the latter leads to the likelihood term, which is determined by the model for the noise in the measurement and the forward map that links the parameters to the response. Once the probabilistic inverse problem has been framed, a method like Markov Chain Monte Carlo (MCMC) \cite{chib2001markov,brooks2011handbook} may be applied to sample from the posterior distribution; these samples help approximate the posterior distribution that may otherwise be analytically intractable~\cite{liu2001monte}. The steady state of the Markov chain leads to samples that are drawn from the desired conditional distribution for the input parameters. 

%The challenges describe above can be addressed by framing inverse problems in a probabilistic framework where the input and response vectors/fields are treated as random vectors/processes. In the probabilistic framework, the solution of an inverse problem transforms to characterizing the conditional probability distribution of the input vector conditioned on a measured response.

%While the MCMC method and its variants can be used to solve some probabilistic inverse problems, they are challenged whenever (a) the dimension of input vector becomes large (greater than $\approx$ 50, for example), (b) the derivative of the forward map with respect to the input parameters \corrAgni{}{(required by more sophisticated MCMC algorithms necessary to tackle high-dimensional problems)} is not easily calculable, which is the case for a legacy forward solver, and (c) when the prior information about the input parameters cannot be written in terms of a simple probability distribution and is known through samples. 

While the MCMC method and its variants can be used to solve some probabilistic inverse problems, their efficacy diminishes under several conditions. These include problems where the dimension of the inferred vector becomes large (greater than $\approx$ 50, for example) --- the so-called \textit{curse of dimensionality} \cite{gelman1997weak,cui2016dimension,huijser2015properties}. It also includes problems where  prior information about the  parameters to be inferred cannot be expressed in terms of a simple probability distribution and is determined through samples. Finally, some variants of the MCMC method with improved convergence properties, like the Hamiltonian Monte Carlo (HMC) method \cite{neal2011mcmc,betancourt2017conceptual}, rely on computing the derivative of the forward map with respect to the parameters. In cases where the forward map is implemented through a complex legacy code, computing this derivative is not feasible and these methods cannot be used. 

Recently, there has been a growing interest in utilizing novel machine and deep learning algorithms to address the challenges described above. Most of these methods first consider samples of the parameters to be inferred derived from a prior distribution and utilize the forward map and a model for noise to generate corresponding samples of the measurement vector. Thereafter, they use this dataset to train algorithms for sampling the conditional distribution for the parameters given the measured response. 
One such class of methods that utilize deep neural networks to map the response vector to the vector of parameters, is referred to as Bayesian Neural Networks~(BNNs)~\cite{pml2Book,blundell2015weight,jospin2022hands,lampinen2001bayesian}. In these networks, in addition to the input and output vectors, the network weights are also treated as random variables. Thereafter, Bayes' rule is applied to derive an expression for the posterior distribution of the network parameters (weights and biases) given the training data. Then, variational inference is used to derive an approximation for the posterior density of the network parameters. Once this density is known, multiple realizations of the network parameters are sampled and the resulting networks are applied to the same measured response to yield an ensemble of inferred vectors. The performance of BNNs is limited by the approximations that are typically incurred to make variational inference tractable, which include assuming mixture models for the weights and selecting a small subset of the network weights as stochastic~\cite{fortuin2022priors,tran2022all}.

%multiple realizations of the network are applied to the same measured response to yield an ensemble of input vectors. The performance of BNNs suffers from the fact that only \corrAgni{a small number of networks can be treated as random}{a handful of networks can be treated as random}, and from the approximations that are typically made to make variational inference tractable~\addAgni{\cite{tran2022all}}. 

Another class of deep learning algorithms utilize Generative Adversarial Networks~(GANs)~\cite{goodfellow2014gans} to learn and then sample from the posterior distribution. GANs are generative models that learn a reduced-order representation of a probability distribution from its samples and then efficiently generate samples from it. GANs comprise of two networks: a generator and a critic. The generator network maps low-dimensional vectors drawn from a simple probability distribution to the output domain, while the critic maps samples from the output domain to a scalar. For the Wasserstein~GAN~\cite{arjovsky2017,gulrajani2017improved}, under suitable assumptions, it can be shown that the probability distribution of the samples produced by the trained generator converges to the true distribution (from which the training samples are drawn) in the Wasserstein-1 distance. This is achieved by imposing a 1-Lipschitz constraint on the critic. 
%The weights of the cirtic network are determined In the Wasserstien GAN the critic is constrained to be 
%They are particularly adept at generating \addAgni{realisitc} images - which can be considered \stAgni{as} discrete representations of a spatial field - that appear \stAgni{realistic}. 
In \cite{patel2021ganb}, a WGAN was used to learn the prior distribution of the 
inferred vector, and then used in conjunction with the HMC method to learn the posterior conditional distribution. While this approach reduced the dimension of the underlying inverse problem by mapping it to the low-dimensional latent space of the WGAN, it still required the use of an HMC method to sample from the posterior distribution. 

In contrast to the approach described above, conditional GANs \cite{mirza2014conditional} can be used to approximate the posterior distribution directly. \citet{adler2018deep} proposed a conditional WGAN (cWGAN) that aims to directly learn the posterior distribution, and used it to improve the results of an inverse problem in medical imaging. \citet{ray2022} improved the architecture of cWGAN and applied it to solve complex inverse problems motivated by physics-driven applications. \citet{kovachki2020conditional} proposed the Monotone GAN (MGAN) to solve inverse problems. This method differs from the cWGAN in several ways. First, it uses a latent space whose dimension is the same as that of the inferred  vector, whereas the cWGAN typically uses a latent space of reduced dimension. Second, it relies on block triangular transformations to map the latent space to the space of the inferred vector, whereas a cWGAN uses a U-NET \cite{ronneberger2015u} architecture.  Finally, it requires the generator that transforms the latent space to the space of inferred parameters to be monotonic, whereas the cWGAN imposes a 1-Lipschitz condition on the critic and not the generator.

The cWGAN proposed by \citet{adler2018deep} uses a critic network that is 1-Lipschitz with respect to the inferred vector only. The 1-Lipschitz constraint is implemented weakly through a gradient penalty term, and the gradient is computed with respect to the inferred vector. 
Herein, we refer to this as Partial-GP. As a result of the partial 1-Lipschitz constraint being imposed on the critic, the Kantorovich-Rubinstein duality (see \cite{villani2008optimal} for example) can be invoked to show that the critic for the cWGAN with Partial-GP belongs to the space of functions that can only be optimized to approximate the Wassertstein-1 distance between distributions of the inferred vector. Based on this, \citet{adler2018deep} develop a training objective function for the cWGAN that aims to reduce the averaged Wasserstein-1 distance between the true posterior distribution and the one induced by the cWGAN; the average is taken over all possible values of the measurement vector. This does not necessarily mean that the distribution of the inferred vector conditioned on any one particular value of the measurement vector will converge to the true posterior distribution even if the critic and generator networks are sufficiently expressive, and are perfectly trained.

In this work we improve upon the cWGAN with Partial-GP \cite{adler2018deep}, with a novel cWGAN formulation which differs from the original formulation in one critical aspect. In our approach, we require the critic network to be a 1-Lipschitz function of all its inputs --- the inferred and measurement vector. Therefore,  the crucial algorithmic difference that arises is that the gradient must be computed with respect to both arguments of the critic as opposed to just the inferred vector for Partial-GP. Remarkably, this small change in the implementation has a significant effect on the properties of the final network. Due to the critic being 1-Lipschitz with respect to its entire input argument, we can show, via the Kantorovich-Rubinstein duality, that the critic in the cWGAN belongs to the space of functions that can minimize the Wasserstein-1 between distributions of the joint random vector containing both the inferred variable and the measurements. As a result, we show that the generator of a cWGAN that is perfectly trained with the same training objective as~\cite{adler2018deep} will converge to the true conditional distribution for the inferred vector for any measurement vector.

%also converges to the true joint distribution in Wasserstein-1 distance.}

%The proposed cWGAN with Full-GP can be used to compute necessary posterior statistics. Using the convergence properties of the proposed formulation, we even show that any estimated posterior statistic will converge to its true value, and we show that this convergence is true for all realizations of the measurement vector. 
Additionally, the convergence guarantees of the proposed cWGAN with Full-GP allow us to develop a new evaluation strategy where the cWGAN is evaluated at measurement values obtained by perturbing the actual measurement vector with samples from a zero-mean Gaussian distribution that has a minute variance. We show that this new evaluation strategy may lead to improved performance of cWGANs. Further, we study the behavior of the proposed cWGAN with Full-GP on a suite of illustrative examples and its ability to approximate the joint distribution and the posterior distribution. We also apply the proposed cWGANs to two challenging inverse problems arising in inverse heat conduction and elastography \cite{barbone2010review} that help establish its efficacy in solving physics-based inverse problems.

The format of the remainder of this manuscript is as follows. In the following section we introduce the mathematical background required to develop and analyze the new cWGAN with Full-GP. In \Cref{sec:cwgan_fullGP},  we describe how the Full-GP cWGAN is trained and then used to solve a probabilistic inverse problem. In Section \Cref{sec:convergence}, we perform a detailed analysis of the convergence of the conditional density generated by Full-GP cWGAN with increasing number of trainable generator parameters. We also compare it with the corresponding analysis for the cWGAN with Partial-GP. In Section \Cref{sec:results}, we apply the Full-GP and Partial-GP cWGAN networks to inverse problems with increasing complexity, and evaluate their ability to recover the true posterior distribution for cases where it is known. We observe that the Full-GP cWGAN performs better. We end with conclusions in \Cref{sec:conclusion}.

\section{Background}

\subsection{Problem setup}

Consider the following measurement model
\begin{equation}\label{eqn:forward}
	\y = \mathcal{F}(\x; \bm{\eta})
\end{equation}
where $\mathcal{F}$ is the forward model/process, $\x \in \Ro^{\Nx}$ is some input field to the forward model, while $\y \in \Ro^\Ny$ is the corresponding output/measured field of the model corrupted by some measurement noise $\bm{\eta}$. Then our goal is to solve the inverse problem of reconstructing/inferring $\x$ given some noisy measurement $\y$. In the probabilistic setting, necessary for carrying out Bayesian inference, the inferred field $\x$ and the measured field $\y$ are modeled as random variables $\X$ and $\Y$, respectively. Let $\mu_{\X\Y}$ be the joint probability measure corresponding to the pair of random variable $(\X,\Y)$, with marginal measures $\mu_\X$, $\mu_\Y$. Then, given a realization $\Y = \hat{\y}$, we wish to approximate the conditional measure $\mu_{\X|\Y}(.|\hat{\y})$ and efficiently draw samples from it in order to evaluate useful conditional statistics.

\subsection{Conditional GANs}
Conditional generative adversarial networks (cGANs) \cite{mirza2014conditional} can be used to learn conditional probability measures. Typically, cGANs comprise two deep neural networks, a generator $\g$ and a critic $d$. The generator is a mapping given by
\begin{equation}\label{eqn:generator}
	\g: \Ro^{\Nz} \times \Ro^\Ny \rightarrow \Ro^\Nx, \quad \g:(\z,\y) \mapsto \x
\end{equation}
where $\z \in \Ro^\Nz$ is a realization of the latent random variable $\Z$ with the measure $\mu_\Z$. The latent measure is chosen to be simple, such as a multidimensional Gaussian, to make it easy to sample from. The generator can be interpreted as a mapping that, given a $\y \sim \mu_\y$, can generate `\textit{fake}' samples of the inferred field from the push-forward measure $\mu^{\g}_{\X|\Y} = \g(.,\y)_\# \mu_\Z$. 
The critic, on the other hand, is given by the mapping
\begin{equation}\label{eqn:critic}
	d: \Ro^{\Nx} \times \Ro^\Ny \rightarrow \Ro, \quad d:(\x,\y) \mapsto r
\end{equation}
whose role is to distinguish between paired samples $(\x,\y)$ drawn from the true joint measure $\mu_{\X\Y}$ and the fake/generated joint measure $\mu^{\g}_{\X\Y} = \mu^{\g}_{\X|\Y} \mu_\Y$.

In a conditional Wasserstein GAN (cWGAN) \cite{adler2018deep}, a variant of cGAN, both the generator and critic networks are trained in an adversarial manner using the objective `\textit{loss}' function
\begin{equation}\label{eqn:objective}
	\begin{aligned}
		\loss(\g,d) &= \ex{\mu_{\X\Y}}{d(\X,\Y)} - \ex{\mu_\Y}{\ex{\mu^{\g}_{\X|\Y}}{d(\X,\Y)}}\\
		&= \ex{\mu_{\X\Y}}{d(\X,\Y)} - \ex{\mu^{\g}_{\X\Y}}{d(\X,\Y)},
	\end{aligned}    
\end{equation}
and solving the minmax problem
\begin{equation}\label{eqn:minmax}
	\begin{aligned}
		g^*, d^* = \argmin{\g} \argmax{d}\loss(\g,d).
	\end{aligned}    
\end{equation}

\section{Conditional WGAN with Full-GP}\label{sec:cwgan_fullGP}

In this work, we propose a cWGAN model with Full-GP which, unlike the original cWGAN proposed by \citet{adler2018deep}, imposes a 1-Lipschitz constraint on the critic $d$ with respect to its entire argument $(\x, \y)$. Let us denote by $\lip$ the 1-Lipschitz functions on $\Ro^{\Nx} \times \Ro^{\Ny}$. If we assume that the maximization in \eqref{eqn:minmax} is over $d \in \lip$, then we can show by the Kantorovich–Rubinstein duality argument \cite{villani2008optimal} that 
\begin{equation}\label{eqn:minmax2}
	\begin{aligned}
		d^*(\g) &= \argmax{d \in \lip} \loss(\g,d) = W_1(\mu_{\X\Y},\mu^\g_{\X\Y}),\\ 
		\g^* &= \argmin{\g} \loss(\g,d^*(\g)) = \argmin{\g} W_1(\mu_{\X\Y},\mu^\g_{\X\Y}),
	\end{aligned}    
\end{equation}
where $W_1$ is the Wasserstein-1 distance defined on the space of joint probability measures on $\Ro^{\Nx} \times \Ro^{\Ny}$. In practice, the 1-Lipschitz constraint on the critic can be imposed by augmenting the loss function by a gradient penalty term \cite{gulrajani2017improved} when training the critic, i.e., by solving
\begin{equation}\label{eqn:max_with_GP}
	d^*(\g) = \argmax{d \in \lip} \left[\loss(\g,d) - \lambda \mathcal{G}\mathcal{P} \right] 
\end{equation}
where $\lambda > 0$ is a hyper-parameter which appropriately weights the penalty term. We use the following penalty term:
\begin{equation}\label{eqn:full_GP}
	\mathcal{G}\mathcal{P} = \ex{\delta \sim \mathcal{U}(0,1)}{(\|\nabla d( \bm{h}(\x,\y,\z,\delta),\y)\|_2 - 1)^2},
\end{equation}
where $\mathcal{U}(0,1)$ denotes the uniform distribution on $[0,1]$, $\nabla$ denotes the gradient with respect to both its arguments, and 
\begin{equation}\label{eqn:blending}
	\bm{h}(\x,\y,\z,\delta) = \delta \x + (1-\delta) \g(\z,\y).
\end{equation}
The gradient penalty term in \Cref{eqn:full_GP} imposes a 1-Lipschitz constraint on the critic with respect to both its arguments $\x$ and $\y$. Using this modified penalty term is crucial to ensure that solving minmax problem is equivalent to minimizing the $W_1$ distance between the true joint measure $\mu_{\X\Y}$ and the generated joint $\mu^\g_{\X\Y}$. This in turn leads to an alternative analysis of (weak) convergence to the true conditional measure $\mu_{\X|\Y}$ which is described in Section \ref{sec:convergence}. 

\begin{remark}
    In the original cWGAN model \cite{adler2018deep}, the penalty term was chosen to constrain the critic to be 1-Lipschitz only with respect to the first argument $\x$. More precisely, \citet{adler2018deep} use the following penalty term: 
\begin{equation}\label{eqn:part_GP}
	\mathcal{G}\mathcal{P} = \ex{\delta \sim \mathcal{U}(0,1)}{(\|\partial_{1}d( \bm{h}(\x,\y,\z,\delta),\y)\|_2 - 1)^2},
\end{equation}
where $\partial_1 d(.,.)$ denotes the derivative with respect to the first argument of the critic. Note that, the derivative operator $\nabla d(.,.)$ in \Cref{eqn:full_GP} has been replace with $\partial_1 d(.,.)$ in \Cref{eqn:part_GP}.
\end{remark}

\subsection{Training}

To train the proposed cWGAN with Full-GP, we assume access to a dataset $\mathcal{S} = \{(\x^{(i)},\y^{(i)}) : 1 \leq i \leq N_s\}$ of paired samples from the true joint measure $\mu_{\X\Y}$. In practice, this can be constructed by first generating samples $\{\x^{(i)}: 1 \leq i \leq N_s\}$ sampled from a measure $\mu_\X$ (based on prior knowledge about the inferred field), and then generating the corresponding $\y_i$'s using the forward (noisy) model from \Cref{eqn:forward}. Alternatively, such paired data might be available from experiments.

Next, by replacing the expectations in the objective function \Cref{eqn:objective} by empirical averages, and choosing suitable architectures for the generator and critic (along with other hyper-parameters), the minmax problem \Cref{eqn:minmax2} is solved to find the optimal critic $d^*$ and generator $\g^*$. This involves using a gradient-based optimizer, such as Adam \cite{kingma2017adam}, to optimize the trainable parameters of the two networks. A computationally tractable strategy to solve the minmax problem is to use an iterative approach to train the networks. Typically, 
$N_\mathrm{max} \approx 4-10$
maximization steps are taken to update the trainable parameters of the critic, followed by a single minimization update step for the generator. The training is terminated after a certain number of epochs (loops over the training dataset) are completed. 

\subsection{Evaluation}\label{sec:evalution}

Consider the expectation  
of a continuous and bounded functional $q \in C_b(\Ro^\Nx)$ with respect to the true posterior conditional measure $\mu_{\X|\Y}(.|\y)$ 
\begin{equation}\label{eqn:cond_exp}
	k(\y;q) := \ex{\mu_{X|Y}}{q(\X)|\y} = \int_{\Ro^{\Nx}}q(\x)  \ud \mu_{\X|\Y}(\x|\y)
\end{equation}
for a given $\y \sim \mu_\Y$. For instance, choosing $q(\x) = x_i$ and $q(\x) = (x_i - \bar{x}_i)^2$ in \Cref{eqn:cond_exp} for $1 \leq i \leq \Nx$ leads to the evaluation of the component-wise mean $\bar{x}_i$ and variance $\sigma_{x,i}^2$ of each component of $\X$. The conditional expectation in \Cref{eqn:cond_exp} can be approximated using the trained generator $\g^*$ of the cWGAN using the following algorithm: Given $\hat{\y} \sim \mu_\Y$, $q \in C_b(\Ro^\Nx)$, sample size $K$ and $\sigma \geq 0$
\begin{enumerate}
	\item Draw $K$ random samples $\{\widetilde{\y}^{(i)}: 1 \leq i \leq K\}$ from $\mathcal{N}(\hat{\y},\sigma^2 \bm{I})$, i.e., the Gaussian measure centered at $\hat{\y}$ with covariance $\sigma^2 \bm{I}$.
	\item Generate $K$ samples $\{\z^{(i)}: 1 \leq i \leq K\}$ from $\mu_\Z$.
	\item Approximate the conditional expectation \eqref{eqn:cond_exp} using the empirical average
	\begin{equation}\label{eqn:app_cond_exp}
		k(\hat{y};q) \approx \frac{1}{K}\sum_{i=1}^{K} q\big(\g^*(\z^{(i)},\widetilde{\y}^{(i)})\big).
	\end{equation}
\end{enumerate}
The motivation behind using \Cref{eqn:app_cond_exp} to approximate the conditional expectations (and the choice of $\sigma$) is given by Theorem \ref{thm:errB} in Section \ref{sec:convergence} (see the discussion following its proof).

\section{Convergence analysis of the cWGAN with Full-GP}\label{sec:convergence}
In this section, we analyze the convergence of the cWGAN with Full-GP and show how the optimal generator can be used to approximate the posterior statistics from \Cref{eqn:cond_exp} in accordance to \Cref{eqn:app_cond_exp}. 
This proof, which forms the main result of this section, is contained in Theorem \ref{thm:errA} for $\sigma = 0$ and in Theorem \ref{thm:errB} for $\sigma >0$.
The proofs of these theorems rely on three distinct assumptions. The first (Assumption 4.1) is a statement of the convergence of a standard (not conditional) WGAN with increasing number of learnable parameters. This is also implicitly assumed in the original WGAN papers \cite{arjovsky2017,gulrajani2017improved}. The second assumption (Assumption 4.2) assumes that the true and learned conditional expectations are bounded. Finally, the third (Assumption 4.3) invokes the absolutely continuity of the marginal measure of measurement vector and ensuring the existence of a bounded marginal density.

Let $\mathcal{G}_n$ be the family of generator networks such that any network $\g \in \mathcal{G}_n$ has $n$ trainable parameters. 
We begin with our first assumption about the convergence of a standard WGAN with increasing \textit{capacity} of the networks.

\begin{assumption}[WGAN convergence]\label{ass:gan_conv}
	We assume that for every $n \in \mathbb{Z}_+$, we can find a minimizing generator $\g^*_n \in \mathcal{G}_n$ and a critic $d^*_n = d^*(\g_n) \in \lip$ such that $d^*_n$ solves the maximization problem in \Cref{eqn:minmax2}. In other words, by defining
	\[
	w_n := \loss(\g^*_n,d^*_n(\g^*_n)) =  W_1(\mu_{\X\Y},\mu^{\g^*_n}_{\X\Y}),
	\]
	for any other $\g_n \in \mathcal{G}_n$, we have
	\[
	W_1(\mu_{\X\Y},\mu^{\g_n}_{\X\Y}) = \loss(\g_n,d^*(\g_n)) \geq w_n.
	\]
	Further, the sequence of minimizing generators $\{\g^*_n\}$ leads to the following convergence
	\[
	\lim_{n \rightarrow \infty} w_n = 0.
	\]
\end{assumption}

Under Assumption \ref{ass:gan_conv}, the sequence of measures $\mu^{\g^*_n}_{\X\Y}$ will converge to $\mu_{\X\Y}$ in the $W_1$ metric, and hence weakly \cite{villani2008optimal}, i.e.,
\begin{equation}\label{eqn:weak_conv}
	\lim_{n \rightarrow \infty} \ex{\mu^{\g^*_n}_{\X\Y}}{\ell(\X,\Y)} =  \ex{\mu_{\X\Y}}{\ell(\X,\Y)} \quad \forall \ \ell \in C_b(\Ro^{\Nx} \times \Ro^{\Ny}).
\end{equation}

Next, we show how an appropriate choice of $\ell \in C_b(\Ro^{\Nx} \times \Ro^{\Ny})$ leads to approximating the conditional expectation \eqref{eqn:cond_exp}. Given a generator $\g$, we define the generated conditional expectations for any $q \in C_b(\Ro^{\Nx})$ as 
\begin{equation}\label{eqn:cond_exp_g}
	k^\g(\y;q) =\ex{\mu^\g_{X|Y}}{q(\X)|\y} = \int_{\Ro^{\Nx}}q(\x)  \ud \mu^\g_{\X|\Y}(\x|\y).
\end{equation}
We require the conditional expectations to be finite and computable.
\begin{assumption}[Bounded conditional expectation]\label{ass:bound_exp}
	For any $q \in C_b(\Ro^{\Nx})$, we assume that:
	\begin{enumerate}
		\item $\|k(\y;q)\|_{L^{\infty}(\Ro^\Ny)} < \infty$; and that
		\item there exists a positive integer $N_b$ (which may depend on $q$) such that 
        \begin{equation*}
            \|k^{\g^*_n}(\y;q)\|_{L^{\infty}(\Ro^\Ny)} \leq C_q < \infty
        \end{equation*}
        for all optimized generators $g^*_n$ with $n\geq N_b$. In other words, we assume a uniform bound (over $q$) for optimal generators beyond a certain network size.
	\end{enumerate} 
\end{assumption}
We also make the following assumption about the true marginal $\mu_\Y$ to simplify the exposition.
\begin{assumption}[Absolute continuity]\label{ass:abs_con}
	We assume that $\mu_\Y$ is absolutely continuous with respect to the Lebesgue measure on $\Ro^{\Ny}$, which ensures the existence of density $p_\Y$ satisfying
	\[
	\ud \mu_\Y(\y) = p_\Y(\y)\ud \y.
	\]
	Furthermore, we assume that $\|p_\Y\|_{L^{\infty}(\Ro^\Ny)} < \infty$.
\end{assumption}

Now, let $\hat{\y} \sim \mu_\Y$ be given for which $p_\Y(\hat{\y}) \neq 0$. Let $\mu_{\Y_\sigma} := \mathcal{N}(\hat{y},\sigma^2 \bm{I})$ be the Gaussian measure on $\Ro^{N_Y}$ with density 
\begin{equation}\label{eqn:gaussian}
	p_\sigma(\y) = \frac{1}{(2 \pi \sigma^2)^{n/2}}\exp\left( -\|\y - \hat{\y}\|^2_{L^{2}(\Ro^\Ny)}/ 2 \sigma^2 \right).
\end{equation}
The following standard result shows how the Dirac measure can be approximated using a Gaussian.
\begin{lemma}\label{lem:dirac_app}
	Let $f \in L^\infty(\Ro^{\Ny})$ and $\mu_{\Y_\sigma} := \mathcal{N}(\hat{y},\sigma^2 \bm{I})$. Then
	\begin{equation}\label{eqn:dirac_app}
		\lim_{\sigma \rightarrow 0} \ex{\mu_{\Y_\sigma}}{f(\Y)} = f(\hat{\y}).
	\end{equation}
	
\end{lemma}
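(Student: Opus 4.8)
The plan is to treat the family $\{\mu_{\Y_\sigma}\}_{\sigma>0}$ as a Gaussian approximate identity (a mollifier) centered at $\hat\y$: convolving a bounded function against it and letting $\sigma\to0$ should recover the value at $\hat\y$. Since the Gaussian density integrates to one, I would first write
\begin{equation*}
\ex{\mu_{\Y_\sigma}}{f(\Y)} - f(\hat\y) \;=\; \int_{\Ro^{\Ny}} \big( f(\y) - f(\hat\y) \big)\, p_\sigma(\y)\,\ud\y ,
\end{equation*}
and then rescale via $\y = \hat\y + \sigma\z$, so that $p_\sigma(\y)\,\ud\y$ becomes $\phi(\z)\,\ud\z$ with $\phi$ the density of $\mathcal N(\bm 0,\bm I)$ on $\Ro^{\Ny}$. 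The claim then reduces to showing $\int_{\Ro^{\Ny}} \big( f(\hat\y+\sigma\z) - f(\hat\y) \big)\phi(\z)\,\ud\z \to 0$ as $\sigma\to0$.

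From here two essentially equivalent routes work. The clean one is dominated convergence: for each fixed $\z$ the integrand tends to $0$ as $\sigma\to0$ (using continuity of $f$ at $\hat\y$), while it is dominated uniformly in $\sigma$ by the integrable function $2\|f\|_{L^\infty(\Ro^{\Ny})}\,\phi(\z)$, so the dominated convergence theorem closes it. The hands-on alternative is a tail/bulk split: given $\varepsilon>0$, choose $R$ so large that $2\|f\|_{L^\infty(\Ro^{\Ny})}\int_{\|\z\|>R}\phi(\z)\,\ud\z < \varepsilon/2$, which is possible by the rapid decay of the Gaussian and is precisely where the hypothesis $f\in L^\infty$ enters; then use continuity at $\hat\y$ to pick $\sigma$ small enough that $|f(\hat\y+\sigma\z)-f(\hat\y)|<\varepsilon/2$ for all $\|\z\|\le R$, and add the two contributions using $\int\phi = 1$.

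The one point that genuinely needs care — and which I expect to be the real obstacle, rather than the estimates, which are routine — is the hypothesis. With $f$ only assumed to lie in $L^\infty(\Ro^{\Ny})$, the symbol $f(\hat\y)$ is defined only up to modification on a Lebesgue-null set, and the limit cannot distinguish representatives; so the identity as written should be read under the additional (and, in the intended application to $k(\cdot;q)$ near a fixed measurement $\hat\y$, harmless) understanding that $f$ is continuous at $\hat\y$ — or, more generally, that $\hat\y$ is a Lebesgue point of $f$, in which case the same split goes through with the oscillation term replaced by $\frac{1}{|B_{\sigma R}|}\int_{B_{\sigma R}(\hat\y)}|f(\y)-f(\hat\y)|\,\ud\y \to 0$ and the right-hand side interpreted as the Lebesgue value at $\hat\y$. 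The boundedness of $f$ is otherwise only needed to control the Gaussian tail, as noted above.
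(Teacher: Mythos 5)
Your proof is essentially the paper's: the same rescaling $\y = \hat{\y} + \sigma\bm{s}$ followed by dominated convergence against the standard Gaussian density (the tail/bulk split you offer as an alternative is just the quantitative version of the same estimate). Your caveat about the hypothesis is well taken and is the one substantive point of divergence --- the paper's proof asserts that the integrand converges a.e.\ to $f(\hat{\y})$ times the Gaussian density, which for a general $f \in L^\infty(\Ro^{\Ny})$ is false (and $f(\hat{\y})$ is not even well defined as a pointwise value); continuity of $f$ at $\hat{\y}$, or the Lebesgue-point reading you propose, is genuinely needed and is left implicit in the paper, both here and in the downstream application to $k(\cdot;q)\,p_\Y(\cdot)$, whose boundedness but not continuity is assumed.
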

\begin{proof}
	Note that
	\[
	\ex{\mu_{\Y_\sigma}}{f(\Y)} = \int_{\Ro^{\Ny}} f(\y) \frac{1}{(2 \pi \sigma^2)^{n/2}}\exp\left( -\|\y - \hat{\y}\|^2_{L^{2}(\Ro^\Ny)}/2\sigma^2 \right) \ud \y.
	\]
	Applying a change of variables $\bm{s} = (\y - \hat{\y})/\sigma$, we get
	\[
	\ex{\mu_{\Y_\sigma}}{f(\Y)} = \int_{\Ro^{\Ny}} f(\sigma\bm{s} + \hat{\y}) \frac{1}{(2 \pi)^{n/2}}\exp\left( -\|\bm{s}\|^2_{L^{2}(\Ro^\Ny)} /2 \right) \ud \bm{s}.
	\]
	Since the integrand converges a.e. to $f(\hat{\y}) \frac{1}{(2 \pi)^{n/2}}\exp\left( -\|\bm{s}\|^2_{L^{2}(\Ro^\Ny)}/2 \right)$ as $\sigma \rightarrow 0$ and 
	\[
	\int_{\Ro^{\Ny}} \frac{1}{(2 \pi)^{n/2}}\exp\left( -\|\bm{s}\|^2_{L^{2}(\Ro^\Ny)} /2\right) \ud \bm{s} = 1
	\]
	we get \eqref{eqn:dirac_app} by an application of the dominated convergence theorem.
\end{proof}

For the rest of the theoretic discussion, we omit the superscript $*$ and it is understood that $\g_n$ denotes the optimal generator in accordance to Assumption \ref{ass:gan_conv}.
We are now ready to state and prove the main results about approximating \Cref{eqn:cond_exp}.

\begin{theorem}\label{thm:errA}
	Let $\hat{\y} \sim \mu_\Y$ be given for which $p_\Y(\hat{\y}) \neq 0$. Let the Assumptions \ref{ass:gan_conv}, \ref{ass:bound_exp} and \ref{ass:abs_con} hold. Then given $\epsilon > 0$ and $q \in C_b(\Ro^\Nx)$, there exists a positive integer $\widetilde{N} := \widetilde{N}(\hat{\y},q,\epsilon) \geq N_b$ such that
	\begin{equation}\label{eqn:err_estimate_A}
            \left| k(\hat{\y};q) - k^{\g_n}(\hat{\y};q) \right| < \epsilon \quad \forall \ n \geq \widetilde{N}
		%\left| k(\hat{\y};q) - k^{\widetilde{\g}}(\hat{\y};q) \right| < \epsilon.
	\end{equation}
	% where $\hat{\mu}^{\g^*}_{\X\Y_{\delta^*}}$ is defined according to \eqref{eqn:delta_joint}.
	Note that $N_b$ is as defined in Assumption \ref{ass:bound_exp}.
\end{theorem}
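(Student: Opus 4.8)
The plan is to transfer the pointwise statement to the \emph{joint} measures by mollifying in $\y$ with a narrow Gaussian. Fix $q\in C_b(\Ro^\Nx)$ and $\epsilon>0$, and for $\sigma>0$ let $p_\sigma$ be the density in \eqref{eqn:gaussian}; then $(\x,\y)\mapsto q(\x)p_\sigma(\y)$ lies in $C_b(\Ro^\Nx\times\Ro^\Ny)$. Using the factorization $\mu^{\g_n}_{\X\Y}=\mu^{\g_n}_{\X|\Y}\mu_\Y$, Fubini, the definition \eqref{eqn:cond_exp_g} of $k^{\g_n}$, and Assumption~\ref{ass:abs_con} ($\ud\mu_\Y=p_\Y\,\ud\y$), one gets the identity
\[
\ex{\mu^{\g_n}_{\X\Y}}{q(\X)p_\sigma(\Y)}=\int_{\Ro^\Ny}p_\sigma(\y)\,k^{\g_n}(\y;q)\,p_\Y(\y)\,\ud\y=\ex{\mu_{\Y_\sigma}}{k^{\g_n}(\Y;q)\,p_\Y(\Y)},
\]
together with the obvious analogue for $\mu_{\X\Y}$ and $k$. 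Applying the weak convergence \eqref{eqn:weak_conv} (a consequence of Assumption~\ref{ass:gan_conv}) with the test function $\ell=q\otimes p_\sigma$ then gives, for each fixed $\sigma$, that $\ex{\mu_{\Y_\sigma}}{k^{\g_n}(\Y;q)p_\Y(\Y)}\to\ex{\mu_{\Y_\sigma}}{k(\Y;q)p_\Y(\Y)}$ as $n\to\infty$. This is exactly where the full (rather than partial) gradient penalty is used: \eqref{eqn:weak_conv} is a statement about the joint law, which is what makes the test function $q\otimes p_\sigma$ admissible.

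Next I would collapse the mollification with Lemma~\ref{lem:dirac_app}. By Assumption~\ref{ass:bound_exp}(i) and Assumption~\ref{ass:abs_con} we have $k(\cdot;q)p_\Y(\cdot)\in L^\infty(\Ro^\Ny)$, so $\ex{\mu_{\Y_\sigma}}{k(\Y;q)p_\Y(\Y)}\to k(\hat\y;q)p_\Y(\hat\y)$ as $\sigma\to0$; and since $|k^{\g_n}(\cdot;q)p_\Y(\cdot)|\le\|q\|_\infty\|p_\Y\|_\infty$, the same lemma gives $\ex{\mu_{\Y_\sigma}}{k^{\g_n}(\Y;q)p_\Y(\Y)}\to k^{\g_n}(\hat\y;q)p_\Y(\hat\y)$ as $\sigma\to0$ for each fixed $n$. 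A triangle inequality through these two smeared quantities, followed by division by $p_\Y(\hat\y)\neq0$, then yields $|k(\hat\y;q)-k^{\g_n}(\hat\y;q)|<\epsilon$ for all $n$ beyond a threshold $\widetilde N$, which is read off the $n$-limit of the previous paragraph once $\sigma$ has been fixed small as a function of $\hat\y,q,\epsilon$.

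The main obstacle is precisely that the $\sigma\to0$ and $n\to\infty$ limits must be interchanged (a Moore--Osgood situation). In the triangle inequality the one term resisting a direct bound is $\big|\ex{\mu_{\Y_\sigma}}{k^{\g_n}(\Y;q)p_\Y(\Y)}-k^{\g_n}(\hat\y;q)p_\Y(\hat\y)\big|$: it vanishes as $\sigma\to0$ for each fixed $n$, but the argument must choose $\sigma$ before $n$, so one needs control of this quantity that is uniform in $n$ for small $\sigma$. I would decompose $k^{\g_n}(\y;q)p_\Y(\y)-k^{\g_n}(\hat\y;q)p_\Y(\hat\y)$ as $k^{\g_n}(\y;q)\big(p_\Y(\y)-p_\Y(\hat\y)\big)+p_\Y(\hat\y)\big(k^{\g_n}(\y;q)-k^{\g_n}(\hat\y;q)\big)$; the first piece is controlled $n$-uniformly by a Lebesgue-point estimate for $p_\Y$ (using $|k^{\g_n}(\cdot;q)|\le\|q\|_\infty$), while the second requires a uniform modulus of continuity for the family $\{k^{\g_n}(\cdot;q)\}$ near $\hat\y$. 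Since $k^{\g_n}(\y;q)=\ex{\mu_\Z}{q(\g_n(\Z,\y))}$, this reduces to equicontinuity in $\y$ of the generator networks — the point at which regularity of the trained generators, rather than just the critic's $1$-Lipschitz bound, must be invoked; an alternative is to couple $\sigma=\sigma_n\downarrow0$ to the $W_1$-rate $w_n\downarrow0$ of Assumption~\ref{ass:gan_conv} (after first reducing $q\in C_b$ to a Lipschitz function by tightness of $\{\mu^{\g_n}_{\X\Y}\}$), which disposes of the $n$-term but leaves the same uniform-modulus requirement on $k^{\g_n}$ near $\hat\y$ as the genuinely delicate ingredient.
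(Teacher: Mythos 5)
Your route is essentially the paper's: mollify in $\y$ with $\ell_\sigma=q\otimes p_\sigma$, rewrite $\ex{\mu_{\X\Y}}{\ell_\sigma}$ and $\ex{\mu^{\g_n}_{\X\Y}}{\ell_\sigma}$ as $\ex{\mu_{\Y_\sigma}}{k(\Y;q)p_\Y(\Y)}$ and $\ex{\mu_{\Y_\sigma}}{k^{\g_n}(\Y;q)p_\Y(\Y)}$, collapse the mollification via Lemma~\ref{lem:dirac_app}, invoke the weak convergence \eqref{eqn:weak_conv} at the fixed $\sigma$, and close with the same three-term triangle inequality after dividing by $p_\Y(\hat{\y})\neq 0$. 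The only divergence is at the ``Moore--Osgood'' term you single out: the paper picks a threshold $\widetilde{\sigma}_2$ such that $\bigl|\tfrac{1}{p_\Y(\hat{\y})}\ex{\mu^{\g_n}_{\X\Y}}{\ell_\sigma}-k^{\g_n}(\hat{\y};q)\bigr|<\epsilon/3$ for all $\sigma\leq\widetilde{\sigma}_2$ \emph{and all} $n\geq N_b$, and justifies the $n$-independence of $\widetilde{\sigma}_2$ by appealing to the uniform bound in part~2 of Assumption~\ref{ass:bound_exp}. Your skepticism about that step is well placed: a uniform $L^\infty$ bound controls the size of $k^{\g_n}(\cdot;q)p_\Y(\cdot)$ but not its modulus of continuity at $\hat{\y}$, and it is the latter that sets the rate in Lemma~\ref{lem:dirac_app}; so a genuinely uniform-in-$n$ choice of $\sigma$ requires exactly the equicontinuity of $\{k^{\g_n}(\cdot;q)\}$ near $\hat{\y}$ (or a coupling of $\sigma_n$ to $w_n$) that you flag as the delicate ingredient, whereas the paper disposes of it by assertion. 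In short, you have reproduced the paper's argument and, in addition, correctly located its weakest link.
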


\begin{proof}
	Consider the Gaussian measure $\mu_{\Y_\sigma}$ whose density $p_\sigma$ is given by \Cref{eqn:gaussian}. Let $\ell_\sigma(\x,\y) = q(x)p_\sigma(\y)$, which clearly belongs to $C_b(\Ro^{\Nx} \times \Ro^{\Ny})$. By Assumption \ref{ass:abs_con}, we have
	\begin{eqnarray}\label{eqn:estimate1}
		\ex{\mu_{\X\Y}}{\ell_\sigma(\X,\Y)} &=& \int_{\Ro^{\Nx} \times \Ro^{\Ny}} q(\x)p_\sigma(\y) \ud \mu_{\X\Y}(\x,\y) \notag \\
		&=& \int_{\Ro^{\Ny} } \int_{\Ro^{\Nx} }q(\x) p_\sigma(\y) \ud \mu_{\X|\Y}(\x|\y) \ud \mu_{\Y}(\y)\notag \\
		&=& \int_{\Ro^{\Ny} }  k(\y;q) p_\sigma(\y) \ud \mu_{\Y}(\y)\notag \\
		&=& \int_{\Ro^{\Ny} }  k(\y;q) p_\sigma(\y) p_\Y(\y) \ud \y \notag \\
		&=& \ex{\mu_{\Y_\sigma}}{k(\Y;q) p_\Y(\Y) }.
	\end{eqnarray}
	Then using Assumptions \ref{ass:bound_exp}, \ref{ass:abs_con}, and Lemma \ref{lem:dirac_app} in \Cref{eqn:estimate1} leads to
	\begin{equation}\label{eqn:estimate2}
		\lim_{\sigma \rightarrow 0} \ex{\mu_{\X\Y}}{\ell_\sigma(\X,\Y)} = k(\hat{\y};q) p_\Y(\hat{\y}).
	\end{equation}
	Thus, there exists $\widetilde{\sigma}_1 = \widetilde{\sigma}_1(\epsilon,q,\hat{\y})$ such that
	\begin{equation}\label{eqn:eps_est1}
		\left| \frac{1}{p_\Y(\hat{\y})} \ex{\mu_{\X\Y}}{\ell_\sigma(\X,\Y)} - k(\hat{\y};q) \right| < \frac{\epsilon}{3} \quad \forall \ \sigma \leq \widetilde{\sigma}_1.
	\end{equation}
	
	\noindent Similarly, we can show that 
	\begin{eqnarray*}
		\ex{\mu^{\g_n}_{\X\Y}}{\ell_\sigma(\X,\Y)} &=& \ex{\mu_{\Y_\sigma}}{k^{\g_n}(\Y;q) p_\Y(\y) },
	\end{eqnarray*}
	and for any optimal generator $\g_n$ with $n \geq N_b$
	\begin{equation}\label{eqn:estimate3}
		\lim_{\sigma \rightarrow 0} \ex{\mu^{\g_n}_{\X\Y}}{\ell_\sigma(\X,\Y)} = k^{\g_n}(\hat{\y};q) p_\Y(\hat{\y}).
	\end{equation}
	In other words there exists $\widetilde{\sigma}_2 = \widetilde{\sigma}_2(\epsilon,q,\hat{\y})$ such that for $n \geq N_b$
	\begin{equation}\label{eqn:eps_est2}
		\left| \frac{1}{p_\Y(\hat{\y})} \ex{\mu^{\g_n}_{\X\Y}}{\ell_\sigma(\X,\Y)} - k^{\g_n}(\hat{\y};q) \right| < \frac{\epsilon}{3} \quad \forall \ \sigma \leq \widetilde{\sigma}_2.
	\end{equation}
	Note that $\widetilde{\sigma}_2$ will be independent on ${\g_n}$ because of the uniform bound assumption in Assumption \ref{ass:bound_exp} for $n \geq N_b$.
	
	We note that by choosing $\sigma = \widetilde{\sigma} := \min(\widetilde{\sigma}_1,\widetilde{\sigma}_2)$ both inequalities \labelcref{eqn:eps_est1,eqn:eps_est2} hold simultaneously. For this $\widetilde{\sigma}$ and the corresponding $\ell_{\widetilde{\sigma}}$, we have the weak convergence estimate \Cref{eqn:weak_conv} due to Assumption \ref{ass:gan_conv}. Thus, there exists a positive integer $\widetilde{N}_1 := \widetilde{N}_1(\widetilde{\sigma}) = \widetilde{N}_1(\hat{\y},q,\epsilon)$ such that
	\begin{equation}\label{eqn:eps_est3}
		|\ex{\mu^{\g_n}_{\X\Y}}{\ell_{\widetilde{\sigma}}(\X,\Y)} -  \ex{\mu_{\X\Y}}{\ell_{\widetilde{\sigma}}(\X,\Y)} | < \frac{\epsilon p_\Y(\hat{y})}{3} \quad \forall \ n \geq \widetilde{N}_1.
	\end{equation}
   Further, by setting %$\sigma = \widetilde{\sigma}$ and \widetilde{N}:=\max(N_b,\widetilde{N}_1) 
	$\widetilde{N}:=\max(N_b,\widetilde{N}_1)$, we have using inequalities \labelcref{eqn:eps_est1,eqn:eps_est2,eqn:eps_est3} that for all $n\geq \widetilde{N}$
 \begin{eqnarray*}
		\left| k(\hat{\y};q) - k^{\g_n}(\hat{\y};q) \right| &\leq& \left| k(\hat{\y};q) - \frac{1}{p_\Y(\hat{\y})} \ex{\mu_{\X\Y}}{\ell_{\widetilde{\sigma}}(\X,\Y)} \right| \\
		&& + \frac{1}{p_\Y(\hat{\y})}\left|\ex{\mu_{\X\Y}}{\ell_{\widetilde{\sigma}}(\X,\Y)} - \ex{\mu^{\g_n}_{\X\Y}}{\ell_{\widetilde{\sigma}}(\X,\Y)} \right| \notag \\
		&& + \left| \frac{1}{p_\Y(\hat{\y})} \ex{\mu^{\g_n}_{\X\Y}}{\ell_{\widetilde{\sigma}}(\X,\Y)} - k^{\g_n}(\hat{\y};q) \right| \notag \\
		&<& \frac{\epsilon}{3} + \frac{1}{p_\Y(\hat{\y})} \frac{\epsilon p_\Y(\hat{\y}) }{3} + \frac{\epsilon}{3} = \epsilon.
	\end{eqnarray*}
\end{proof}

We construct a sequence of joint measures with elements
\begin{equation}\label{eqn:delta_joint}
	\hat{\mu}^{\g_n}_{\X\Y_\sigma}(\x,\y) = \mu^{\g_n}_{\X|\Y}(\x|\y) \mu_{\Y_\sigma}(\y),   
\end{equation}
which is different from $\mu_{\X\Y}$ or $\mu^{\g_n}_{\X\Y}$. We now prove a robustness result which shows that we can obtain error estimates similar to Theorem \ref{thm:errA} even if we inject a controlled amount of noise into the given measurement $\y$ in accordance to $\mu_{\Y_\sigma}$.  

\begin{theorem}\label{thm:errB}
	Let $\hat{\y} \sim \mu_\Y$ be given for which $p_\Y(\hat{\y}) \neq 0$. Let the Assumptions \ref{ass:gan_conv}, \ref{ass:bound_exp} and \ref{ass:abs_con} hold. Then given $\epsilon > 0$ and $q \in C_b(\Ro^\Nx)$, there exists a positive real number $\bar{\sigma} := \bar{\sigma}(\hat{y},q,\epsilon)$ and a positive integer with $\bar{N} := \bar{N}(\bar{\sigma}) \geq N_b$ such that
 \begin{equation}\label{eqn:err_estimate_B}
		% \left| k(\hat{\y};q) - \ex{\hat{\mu}^{\bar{\g}}_{\X\Y_{\bar{\sigma}}}}{q(\X)} \right| < \epsilon,
        \left| k(\hat{\y};q) - \ex{\hat{\mu}^{\g_n}_{\X\Y_{\bar{\sigma}}}}{q(\X)} \right| < \epsilon \quad \forall \ n \geq \bar{N},
	\end{equation}
	where $\hat{\mu}^{\g_n}_{\X\Y_{\bar{\sigma}}}$ is defined according to \Cref{eqn:delta_joint}.  
\end{theorem}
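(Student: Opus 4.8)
The plan is to rewrite the left-hand side of \eqref{eqn:err_estimate_B} entirely in terms of the generated conditional expectation $k^{\g_n}(\cdot;q)$ from \eqref{eqn:cond_exp_g}, and then perform a two-scale split: one piece made small by shrinking $\sigma$ (via Lemma~\ref{lem:dirac_app}), the other made small by letting $n\to\infty$ (via Theorem~\ref{thm:errA} and the uniform bound of Assumption~\ref{ass:bound_exp}). First I would disintegrate the smeared joint measure \eqref{eqn:delta_joint}: since $\hat\mu^{\g_n}_{\X\Y_\sigma}=\mu^{\g_n}_{\X|\Y}\,\mu_{\Y_\sigma}$ and $\mu^{\g_n}_{\X|\Y}(\cdot|\y)$ is a probability measure for each $\y$,
\[
\ex{\hat\mu^{\g_n}_{\X\Y_\sigma}}{q(\X)}=\int_{\Ro^{\Ny}}\!\Big(\!\int_{\Ro^{\Nx}}\! q(\x)\,\ud\mu^{\g_n}_{\X|\Y}(\x|\y)\Big)\ud\mu_{\Y_\sigma}(\y)=\ex{\mu_{\Y_\sigma}}{k^{\g_n}(\Y;q)},
\]
so the quantity to control is $\big|k(\hat\y;q)-\ex{\mu_{\Y_\sigma}}{k^{\g_n}(\Y;q)}\big|$.

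Next I would insert the intermediate term $\ex{\mu_{\Y_\sigma}}{k(\Y;q)}$, so that the triangle inequality bounds the target by $(\mathrm{I})+(\mathrm{II})$ with $(\mathrm{I})=\big|k(\hat\y;q)-\ex{\mu_{\Y_\sigma}}{k(\Y;q)}\big|$ and $(\mathrm{II})=\ex{\mu_{\Y_\sigma}}{\big|k(\Y;q)-k^{\g_n}(\Y;q)\big|}$. For $(\mathrm{I})$: part~(1) of Assumption~\ref{ass:bound_exp} gives $k(\cdot;q)\in L^\infty(\Ro^{\Ny})$, so Lemma~\ref{lem:dirac_app} with $f=k(\cdot;q)$ yields $\ex{\mu_{\Y_\sigma}}{k(\Y;q)}\to k(\hat\y;q)$ as $\sigma\to0$. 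I would then fix $\bar\sigma=\bar\sigma(\hat\y,q,\epsilon)>0$ so that $(\mathrm{I})<\epsilon/2$ for all $\sigma\le\bar\sigma$; crucially this choice of $\sigma$ does not involve $n$.

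For $(\mathrm{II})$, now with $\sigma=\bar\sigma$ held fixed: Theorem~\ref{thm:errA} applies at every $\y$ with $p_\Y(\y)\neq0$, giving $k^{\g_n}(\y;q)\to k(\y;q)$ pointwise as $n\to\infty$ (the threshold $\widetilde N$ there depends on $\y$, which is immaterial here), while Assumption~\ref{ass:bound_exp} provides $|k^{\g_n}(\y;q)|\le C_q$ for $n\ge N_b$ and $|k(\y;q)|\le\|k(\cdot;q)\|_{L^\infty(\Ro^\Ny)}$. Hence the integrand in $(\mathrm{II})$ is dominated by a constant, which is integrable against the probability measure $\mu_{\Y_{\bar\sigma}}$, and since $\mu_{\Y_{\bar\sigma}}$ is carried by $\{p_\Y>0\}$ up to a null set (Assumption~\ref{ass:abs_con} and $\hat\y$ in the support of $\mu_\Y$), the pointwise convergence holds $\mu_{\Y_{\bar\sigma}}$-a.e. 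The dominated convergence theorem then gives $(\mathrm{II})\to0$, so there is $\bar N=\bar N(\bar\sigma)\ge N_b$ with $(\mathrm{II})<\epsilon/2$ for $n\ge\bar N$. Combining this with the bound on $(\mathrm{I})$ at $\sigma=\bar\sigma$ yields \eqref{eqn:err_estimate_B}.

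I expect the main obstacle to be step $(\mathrm{II})$. In Theorem~\ref{thm:errA} the weak-convergence estimate \eqref{eqn:weak_conv} was invoked at a single fixed test function; here, by contrast, the convergence of $k^{\g_n}(\cdot;q)$ must be integrated against the spread-out Gaussian $\mu_{\Y_{\bar\sigma}}$, and this is exactly where part~(2) of Assumption~\ref{ass:bound_exp}—the uniform-in-$n$ bound on $k^{\g_n}(\cdot;q)$—is indispensable: it supplies the domination needed to exchange limit and integral, and it is what forces the order of quantifiers ($\bar\sigma$ first, then $\bar N=\bar N(\bar\sigma)$). A secondary delicate point is that $\mu_{\Y_{\bar\sigma}}$ is not absolutely continuous with respect to $\mu_\Y$, so one must make sure the pointwise convergence from Theorem~\ref{thm:errA} is available $\mu_{\Y_{\bar\sigma}}$-a.e. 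Finally, it is worth noting that—unlike in Theorem~\ref{thm:errA}—no normalization by $p_\Y(\hat\y)$ appears in \eqref{eqn:err_estimate_B}, because $\hat\mu^{\g_n}_{\X\Y_{\bar\sigma}}$ already has the probability measure $\mu_{\Y_{\bar\sigma}}$ as its $\Y$-marginal; this is precisely the feature that makes the perturbed-evaluation strategy of \Cref{sec:evalution} attractive in practice.
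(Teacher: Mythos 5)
Your decomposition is genuinely different from the paper's. The paper proves this result by extending the four-term triangle inequality of Theorem \ref{thm:errA}: it works with the single test function $\ell_{\bar\sigma}(\x,\y)=q(\x)p_{\bar\sigma}(\y)$, normalizes by $p_\Y(\hat\y)$, invokes the weak convergence \eqref{eqn:weak_conv} of the \emph{joint} measures at that one test function, and adds a fourth term $|k^{\g_n}(\hat\y;q)-\ex{\hat{\mu}^{\g_n}_{\X\Y_{\bar\sigma}}}{q(\X)}|$ controlled by Lemma \ref{lem:dirac_app} applied to the everywhere-defined function $k^{\g_n}(\cdot;q)$. You instead black-box Theorem \ref{thm:errA} pointwise in $\y$ and integrate the resulting convergence against the Gaussian via dominated convergence. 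That is an attractive shortcut, but it has a gap precisely at the point you dismiss as "secondary."

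The gap is in your term $(\mathrm{II})$. The measure $\mu_{\Y_{\bar\sigma}}=\mathcal{N}(\hat\y,\bar\sigma^2\bm{I})$ is mutually absolutely continuous with Lebesgue measure on all of $\Ro^{\Ny}$, whereas Assumption \ref{ass:abs_con} only gives $\mu_\Y\ll\mathrm{Leb}$ with bounded density; it does \emph{not} give $p_\Y>0$ Lebesgue-a.e. If $\mu_\Y$ is supported on a proper subset of $\Ro^{\Ny}$ (as in every numerical example of the paper, where the measurements live in bounded sets), then $\{p_\Y=0\}$ has positive $\mu_{\Y_{\bar\sigma}}$-measure. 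On that set Theorem \ref{thm:errA} provides no convergence, and worse, $k(\y;q)$ is not even canonically defined there, since the disintegration $\mu_{\X|\Y}(\cdot|\y)$ is fixed only $\mu_\Y$-a.e.; so the integrand $|k(\Y;q)-k^{\g_n}(\Y;q)|$ is ill-defined on a set of positive $\mu_{\Y_{\bar\sigma}}$-measure and your claim that "the pointwise convergence holds $\mu_{\Y_{\bar\sigma}}$-a.e." is unjustified. The DCT step therefore does not go through as written. A repair is possible: bound the contribution of $\{p_\Y=0\}$ by $(C_q+\|k(\cdot;q)\|_{L^\infty})\,\mu_{\Y_{\bar\sigma}}(\{p_\Y=0\})$ and note that for $\mu_\Y$-a.e.\ $\hat\y$ (a Lebesgue density point of $\{p_\Y>0\}$) this tends to $0$ as $\bar\sigma\to0$, so it can be absorbed by choosing $\bar\sigma$ small \emph{before} sending $n\to\infty$ --- but this extra argument must be supplied, and it restricts the conclusion to almost every $\hat\y$. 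The same definedness caveat touches your term $(\mathrm{I})$, where Lemma \ref{lem:dirac_app} is applied to a version of $k(\cdot;q)$ that is arbitrary off the support of $\mu_\Y$. The paper sidesteps both issues by only ever integrating $k(\cdot;q)$ against $\mu_{\Y_\sigma}$ in the combination $k(\cdot;q)p_\Y(\cdot)$, which vanishes wherever the ambiguity lives, and by treating the globally defined $k^{\g_n}(\cdot;q)$ separately.
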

\begin{proof}
	Following the proof of Theorem \ref{thm:errA}, we can find a $\bar{\sigma}_1 = \bar{\sigma}_1(\epsilon,q,\hat{y})$ and a $\bar{\sigma}_2 = \bar{\sigma}_2(\epsilon,q,\hat{y})$ such that
	\begin{equation}\label{eqn:eps_est5}
		\left| \frac{1}{p_\Y(\hat{\y})} \ex{\mu_{\X\Y}}{\ell_\sigma(\X,\Y)} - k(\hat{\y};q) \right| < \frac{\epsilon}{4} \quad \forall \ \sigma \leq \bar{\sigma}_1.
	\end{equation}
	and
	\begin{equation}\label{eqn:eps_est6}
		\left| \frac{1}{p_\Y(\hat{\y})} \ex{\mu^{\g_n}_{\X\Y}}{\ell_\sigma(\X,\Y)} - k^{\g_n}(\hat{\y};q) \right| < \frac{\epsilon}{4} \quad \forall \ \sigma \leq \bar{\sigma}_2 \ \ \text{and} \ \ n \geq N_b.
	\end{equation}
	
	We also have
	\begin{eqnarray*}
		\ex{\hat{\mu}^{\g_n}_{\X\Y_\sigma}}{q(\X)}&=& \int_{\Ro^{\Nx} \times \Ro^{\Ny}} q(\x) \ud \hat{\mu}^{\g}_{\X\Y_\sigma}(\x,\y)\notag \\
		&=& \int_{\Ro^{\Ny} } \int_{\Ro^{\Nx} }q(\x) \ud \mu^{\g_n}_{\X|\Y}(\x|\y) \ud \mu_{\Y_\sigma}(\y) \notag \\
		&=& \int_{\Ro^{\Ny} } k^{\g_n}(\y;q) \ud \mu_{\Y_\sigma} \notag \\
		&=& \ex{\mu_{\Y_\sigma}}{k^{\g_n}(\Y;q)}.
	\end{eqnarray*}
	By Assumption \ref{ass:bound_exp} and Lemma \ref{lem:dirac_app}, we have for $n\geq N_b$
	\begin{equation}
		\lim_{\sigma \rightarrow 0} \ex{\hat{\mu}^{\g_n}_{\X\Y}}{q(\X)} = k^{\g_n}(\hat{\y};q).
	\end{equation}
	Thus there exists $\bar{\sigma}_3 := \bar{\sigma}_3(\epsilon,q,\hat{\y})$ such that for $n\geq N_b$
	\begin{equation}\label{eqn:eps_est7}
		| \ex{\hat{\mu}^{\g_n}_{\X\Y}}{q(\X)} - k^{\g_n}(\hat{\y};q) | < \frac{\epsilon}{4} \quad \forall \ \sigma \leq \bar{\sigma}_3.
	\end{equation}
	
	We set $\sigma = \bar{\sigma} := \min(\bar{\sigma}_1,\bar{\sigma}_2,\bar{\sigma}_3)$ and consider the corresponding $\ell_{\bar{\sigma}}$. Thus, by \Cref{eqn:weak_conv} there exists a positive integer $\bar{N}_1 = \bar{N}_1(\bar{\sigma})$ such that
	\begin{equation}\label{eqn:eps_est8}
		|\ex{\mu^{\g_n}_{\X\Y}}{\ell_{\bar{\sigma}}(\X,\Y)} -  \ex{\mu_{\X\Y}}{\ell_{\bar{\sigma}}(\X,\Y)} | < \frac{\epsilon p_\Y(\hat{y})}{4} \quad \forall \ n \geq \bar{N}_1.
	\end{equation}
	
	By setting %$\sigma = \bar{\sigma}$ and $n = \bar{N} = \max(\bar{N}_1,N_b)$
	$\bar{N} = \max(\bar{N}_1,N_b)$ we have using inequalities \labelcref{eqn:eps_est5,eqn:eps_est6,eqn:eps_est7,eqn:eps_est8} that for all $n \geq \bar{N}$
 \begin{eqnarray*}
		\left| k(\hat{\y};q) - \ex{\hat{\mu}^{\g_n}_{\X\Y_{\bar{\sigma}}}}{q(\X)} \right| &\leq& \left| k(\hat{\y};q) - \frac{1}{p_\Y(\hat{\y})} \ex{\mu_{\X\Y}}{\ell_{\bar{\sigma}}(\X,\Y)} \right| \\
		&& + \frac{1}{p_\Y(\hat{\y})}\left|\ex{\mu_{\X\Y}}{\ell_{\bar{\sigma}}(\X,\Y)} - \ex{\mu^{\g_n}_{\X\Y}}{\ell_{\bar{\sigma}}(\X,\Y)} \right| \notag \\
		&& + \left| \frac{1}{p_\Y(\hat{\y})} \ex{\mu^{\g_n}_{\X\Y}}{\ell_{\bar{\sigma}}(\X,\Y)} - k^{\g_n}(\hat{\y};q) \right| \notag \\
		&& + \left| k^{\g_n}(\hat{\y};q) - \ex{\hat{\mu}^{\g_n}_{\X\Y_{\bar{\sigma}}}}{q(\X)} \right| \\
		&<& \frac{\epsilon}{4} + \frac{1}{p_\Y(\hat{\y})} \frac{\epsilon p_\Y(\hat{\y}) }{4} + \frac{\epsilon}{4} + \frac{\epsilon}{4} = \epsilon.
	\end{eqnarray*}
\end{proof}

Note that
\begin{eqnarray}\label{eqn:cond_exp_x}
	\ex{\hat{\mu}^{\g_n}_{\X\Y_\sigma}}{q(\X)}&=& \int_{\Ro^{\Nx} \times \Ro^{\Ny}} q(\x) \ud \hat{\mu}^{\g_n}_{\X\Y_\sigma}(\x,\y)\notag \\
	&=& \int_{\Ro^{\Ny} } \int_{\Ro^{\Nx} }q(\x) \ud \mu^{\g_n}_{\X|\Y}(\x|\y) \ud \mu_{\Y_\sigma}(\y) \notag \\
	&=& \int_{\Ro^{\Ny} } \int_{\Ro^{\Nz} }q(\g_n(\z,\y)) \ud \mu_\Z(\z) \mu_{\Y_\sigma}(\y).
\end{eqnarray}
Taking the Monte Carlo approximation of \Cref{eqn:cond_exp_x} leads to the approximation formula \Cref{eqn:app_cond_exp} in our algorithm, with the choice of $\sigma$ motivated by the proof of Theorem \ref{thm:errB}. In particular, to get a reasonable approximation of the conditional expectation, we need to pick a $\sigma = \bar{\sigma}$ small enough to ensure that inequalities \labelcref{eqn:eps_est5,eqn:eps_est6,eqn:eps_est7} are satisfied and a generator large (expressive) enough to ensure the inequality \labelcref{eqn:eps_est8} is satisfied.

\subsection{Comparison with the Partial-GP approach of \cite{adler2018deep} }
We briefly discuss the theoretical inferences of the Partial-GP approach considered for cWGANs in \cite{adler2018deep}, and how it differs from the Full-GP approach. 
Let us define by $\lipx$ the space of 1-Lipschitz function in $\Ro^{\Nx}$, and by $\lipxe$ the space of functions measurable in $\Ro^{\Nx} \times \Ro^{\Ny}$ that are 1-Lipschitz in $\Ro^{\Nx}$ for all $\y$. Clearly, $\lip \subset \lipxe$ and $d \in \lipxe$ implies $d(.,y) \in \lipx$ for any $y \in \dby$. The authors of \cite{adler2018deep} solve the maximization problem \eqref{eqn:max_with_GP} with \eqref{eqn:part_GP} but over the larger space 
$\lipxe$
\begin{eqnarray}\label{eqn:max_with_part_GP}
	d^*(\g) &=& \argmax{d \in \lipxe} \left[\loss(\g,d) - \lambda \mathcal{G}\mathcal{P} \right] \notag \\
	&=& \argmax{d \in \lipxe} \big\{  \ex{\mu_{\X\Y}}{d(\X,\Y)} - \ex{\mu^{\g}_{\X\Y}}{d(\X,\Y)} \big\} \notag \\
	&=& \argmax{d \in \lipxe} \ex{\mu_\Y}{ \ex{\mu_{\X|\Y}}{d(\X, \Y)|\Y}   - \ex{\mu^{\g}_{\X|\Y}}{d(\X, \Y)|\Y}}.
\end{eqnarray}
The authors then proceed to 
prove that 
the $\argmax{d \in \lipx}$ and $\mathbb{E}_{\mu_\Y}$ operators in \Cref{eqn:max_with_part_GP} 
may be interchanged 
under the following technical assumption
\begin{assumption}\label{ass:adler}
Given a generator $\g$ and $\epsilon > 0$,
there exists a measurable mapping $D:(\x,\y) \mapsto \hat{d}_\y(x)$ such that for all $\y \in \dby$ it holds that $\hat{d}_\y \in \lipx$  and 
\[
\ex{\mu_{\X|\Y}}{\hat{d}_\y(\X)|\y}   - \ex{\mu^{\g}_{\X|\Y}}{\hat{d}_\y(\X)|\y} > \max\limits_{d \in \lipxe} \left\{\ex{\mu_{\X|\Y}}{d(\X, \Y)|\y}   - \ex{\mu^{\g}_{\X|\Y}}{d(\X, \Y)|\y} \right\} - \epsilon.
\]
\end{assumption}
We note that the Assumption \ref{ass:adler} may not be easy to verify, nor is it easy to interpret the mapping $D$.

Thereafter, by an application of the Kantorovich–Rubinstein duality argument, they obtain \begin{eqnarray}\label{eqn:max_with_part_GP_cont}
	d^*(\g) &=&  \ex{\mu_\Y}{ \argmax{d \in \lipxe} \left\{\ex{\mu_{\X|\Y}}{d(\X, \Y)|\Y}   - \ex{\mu^{\g}_{\X|\Y}}{d(\X, \Y)|\Y}\right\}} \notag \\
	&=& \ex{\mu_\Y}{W_1(\mu_{\X|\Y}, \mu^{\g}_{\X|\Y})},
\end{eqnarray}
which is the `\textit{expected}' (mean) $W_1$ metric with respect to $\mu_\Y$ between the true conditional measure and the learned conditional measure. Thus, finding the optimal generator $\g^*$ corresponds to minimizing this `\textit{expected}' (mean) $W_1$ metric between the conditionals with respect to the true measure $\mu_{\Y}$. However, finding a sequence of generators $\g_n^*$ such that 
\begin{equation}\label{eqn:adler_conv}
	\lim_{n\rightarrow \infty} \ex{\mu_\Y}{W_1(\mu_{\X|\Y}, \mu^{\g_n^*}_{\X|\Y})} = 0
\end{equation}
does not guarantee weak convergence of the measures $\mu^{\g_n^*}_{\X|\Y}$ to $\mu_{\X|\Y}$. 
In fact, by the properties of convergence in $L^1(\mu_\Y)$, \cref{eqn:adler_conv} would only guarantee the existence of a sub-sequence of measures $\mu^{\g_{n_k}}_{\X|\Y}$ such that $\lim_{k\rightarrow \infty} W_1(\mu_{\X|\Y}, \mu^{\g_{n_k}^*}_{\X|\Y}) = 0$, which in turn would lead to the weak-convergence result
\[
\lim_{k \rightarrow \infty} \ex{\mu^{\g^*_{n_k}}_{\X|\Y}}{q(\X)|\hat{\y}} =  \ex{\mu_{\X|\Y}}{q(\X)|\hat{\y}} \quad \forall \ q \in C_b(\Ro^{\Nx}).
\]

In contrast to Partial-GP, Full-GP is able to provide an alternate route to prove weak convergence. The primary difference stems from formulating the maximization \Cref{eqn:max_with_GP} over 
$\lip \subset \lipxe$. 
First, this allows us to consider the weak convergence of the learnt joint  measures $\mu^{\g_n^*}_{\X\Y}$ to the true joint measure $\mu_{\X\Y}$ without requiring a
a technical assumption like Assumption \ref{ass:adler} or a 
sub-sequential argument. Then, by using the fact that a Gaussian measure closely resembles a Dirac measure in the asymptotic limit of diminishing variance (Lemma \ref{lem:dirac_app}), we are able to derive error estimates for approximating the conditional expectation in \Cref{eqn:cond_exp} by sampling using the trained generator (Theorems \ref{thm:errA} and \ref{thm:errB}).

\section{Numerical experiments}\label{sec:results}

In this section, several numerical experiments are presented to analyze the performance of the proposed cWGANs with Full-GP. The experiments include: one-dimensional problems where the target measure is known; an inverse heat conduction problem where the reference statistics of the target measure are available \cite{ray2022}; and a complex inverse problem in elastography where the target measure is unknown. For the first two problems, where the target statistics are available and we have a reliable reference to estimate error, we use both the Partial-GP  \cite{adler2018deep} (see \Cref{eqn:part_GP}), and Full-GP (see \Cref{eqn:full_GP}) approaches to perform a  comparison of the two methods.

\subsection{Illustrative examples}\label{sec:illus_ex}
We first analyze the performance of the proposed methodology using one-dimensional random variables %$\X$ and $\Y$
$X$ and $Y$ (the paired random variable, %$(\X,\Y)$
$(X, Y)$, being two-dimensional) where the target probability measures are available. These are defined as follows:

\begin{align}
    \text{Tanh + $\Gamma$}:& \quad x = \operatorname{tanh}(y) + \gamma \text{ where } \gamma \sim \Gamma(1, 0.3) \text{ and } \ y \sim U(-2,2) \label{eqn:tanhGamma} \\
    \text{Bimodal}:& \quad x = (y+w)^{1/3} \text{ where } y \sim \mathcal{N}(0, 1) \text{ and } w \sim \mathcal{N}(0, 1) \label{eqn:bimodal} \\
    \begin{split}
        \text{Swissroll}:& \quad x = 0.1t\sin(t)+0.1w, \ y = 0.1t\cos(t)+0.1v, \ t=3\pi/2(1+2h), \\
        &\quad \text{where } h \sim U(0,1), \ w \sim \mathcal{N}(0,1) \text{ and } \ v \sim \mathcal{N}(0,1)
    \end{split} \label{eqn:swissroll}
\end{align}
Here $\mathcal{N},U$ and $\Gamma$ refer to the Normal, Uniform, and Gamma distributions, respectively. Column 1 of \Cref{fig:toyprob_jointhist} shows the true joint distribution $\mu_{XY}$ corresponding to \Cref{eqn:tanhGamma,eqn:bimodal,eqn:swissroll}.
% \begin{equation}\label{eqn:tanhGamma}
% \text{Tanh + $\Gamma$}: \quad x = \operatorname{tanh}(y) + \gamma \text{ where } \gamma \sim \Gamma(1, 0.3) \text{ and } \ y \sim U(-2,2)
% \end{equation}
% \begin{equation}\label{eqn:bimodal}
%  x = (x+z)^{1/3}, \ x \sim \mathcal{N}(0, 1), \ z \sim \mathcal{N}(0, 1)
% \end{equation}
% \begin{equation}\label{eqn:swissroll}
% \begin{aligned}
%  x = 0.1t\sin(t)+0.1w, \ y = 0.1t\cos(t)+0.1v, \ t=3\pi/2(1+2z), \\ 
%    z \sim U(0,1), \ w \sim \mathcal{N}(0,1), \ v \sim \mathcal{N}(0,1)
% \end{aligned}
% \end{equation}
We consider the problem of learning the distribution of $x$ conditioned on $y$. 
%This terminology is chosen to adhere to the terminology of inverse problems. 
%\corrAgni{We refer as "Tanh+$\Gamma$", "Bimodal" and "Swissroll" to the random variables defined by Equations \ref{eqn:tanhGamma}, \ref{eqn:bimodal} and \ref{eqn:swissroll}, respectively.
%To study these problems, the new methodology is implemented using the Full-GP and the Partial-GP \cite{adler2018deep} is used as reference for comparison purposes. The critic and generator networks were implemented using mutilayer perceptrons (MLPs). A hyper-parameter search (including the gradient penalty weight, $\lambda$) was performed to find the optimal configuration for each method. The training data set is composed of 2,000 samples. A further description of the problem set-up is shown in Appendix \ref{app:archs_2Dexp}.}{
For all three problems, we construct a training dataset that is composed of 2,000 samples of $x$ and $y$. For this set of problems, we compare the performance of cWGANs with Partial-GP and Full-GP. For the cWGAN with Full-GP we consider two cases: $\sigma=0$ (\ie $k(\hat{y};q)$ is estimated as $\sum_{i=1}^{K} q(\g^*(z_i,\hat{y}))/ K)$ for a given $y = \hat{y}$), and an optimally chosen $\sigma$ (\ie $k(\hat{y};q)$ estimated using \Cref{eqn:app_cond_exp}). The optimal value of $\sigma$ is chosen such that the batch Wasserstein-1 distance, $\tilde{W}_1$, between $\mu^{\g}_{X|Y}(\cdot|\hat{y})$ and $\mu_{X|Y}(\cdot|\hat{y})$ attains a minimum; we estimate the $\tilde{W}_1$ distance using the open source POT package \cite{flamary2021pot} and 10\textsuperscript{4} realizations. Moreover, $\sigma$ is scaled by $\sigma_y$ such that $\sigma/\sigma_y=\sigma^*$, where $\sigma_y$ is the standard deviation of the random variable $Y$. In practice, it is easier to work with $\sigma^*$ rather than $\sigma$ since the former is dimensionless and a scaled quantity and can be compared across different examples.

In the cWGANs, for both Full-GP and Partial-GP, we use multilayer perceptrons (MLPs) for the critic and generator networks and perform a hyper-parameter search (including for the gradient penalty weight $\lambda$) to determine the optimal configuration for each method. We provide further description of the problem set-up and additional details necessary to reproduce the results in \Cref{app:archs_2Dexp}.

\begin{figure}[H]
\centering
\includegraphics[width=16.5cm]{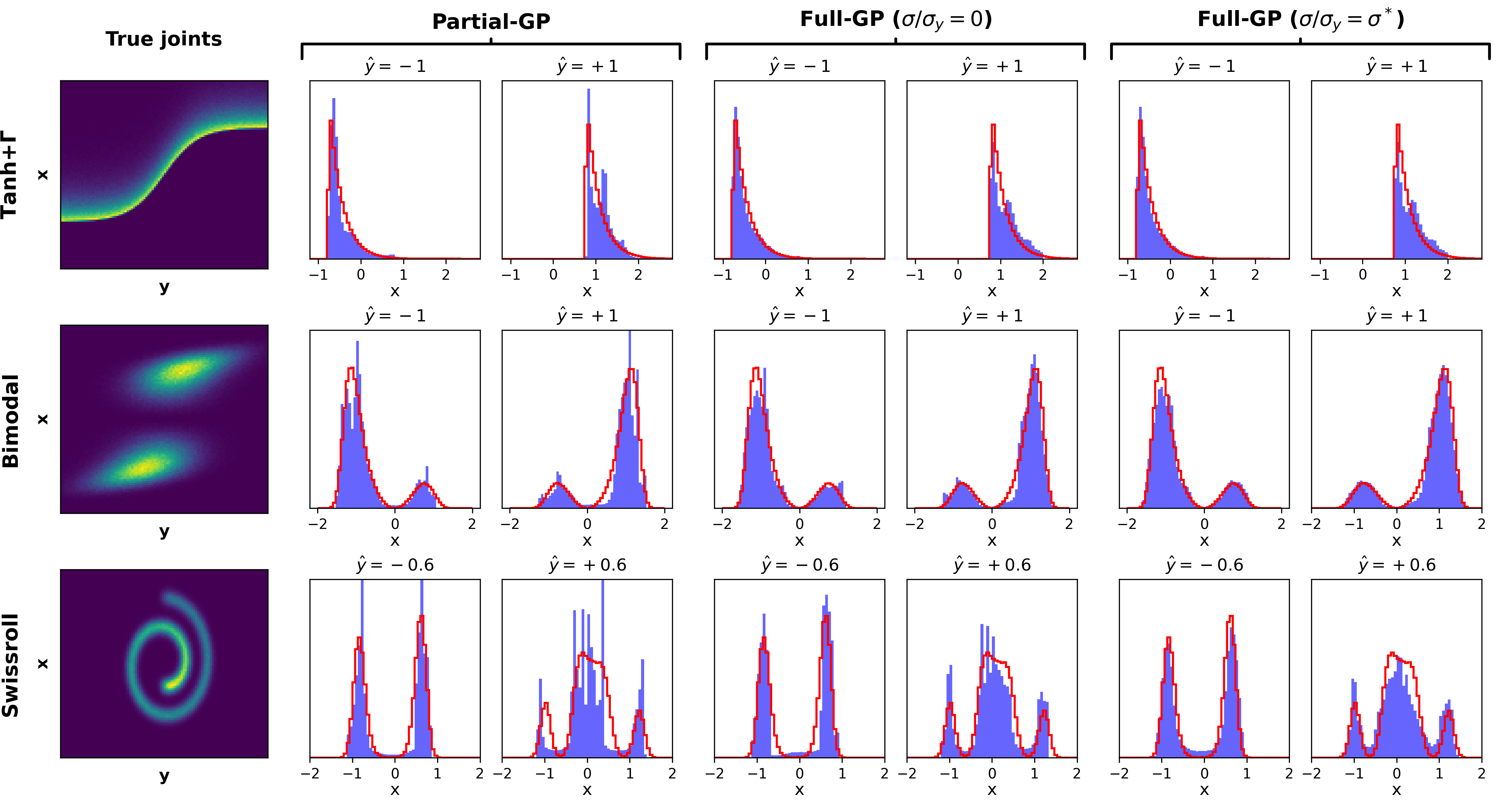}
\caption{The leftmost column shows histograms generated from 10\textsuperscript{6} samples from the true joint measure $\mu_{XY}$. The other columns show histograms of 10\textsuperscript{5} samples generated from the generated conditional measures $\mu^g_{X|Y}(\cdot|\hat{y})$ and, in red, the outline of histograms of samples from the true conditional measure $\mu_{X|Y} (\cdot|\hat{y})$}
\label{fig:toyprob_jointhist}
\end{figure}

\Cref{fig:toyprob_jointhist} shows the results when a cWGAN with Full-GP and Partial-GP attempts to learn the conditional measure $\mu_{X|Y}(\cdot|\hat{y})$ for the problems defined in \Cref{eqn:tanhGamma,eqn:bimodal,eqn:swissroll} corresponding to two separate values of $\hat{y}$.
Columns 2-7 show histograms of the samples drawn from $\mu^g_{X|Y}(\cdot|\hat{y})$ and, in red, the outline of the histogram of samples drawn from the true conditional distribution $\mu_{X|Y}(\cdot|\hat{y})$.  Columns 2 and 3 correspond to the cWGAN with Partial-GP, columns 4 and 5 correspond to Full-GP with $\sigma=0$, and columns 6 and 7 correspond to Full-GP with optimally chosen $\sigma = \sigma^* \sigma_y$. Although the difference between the histograms for the  distribution of $x$ conditioned on $y$ obtained using the various approaches is minor, we observe that in all three cases the Full GP approach yields histograms of the conditional distribution that are closer to their corresponding reference. The histograms obtained using cWGANs with Full-GP where $\sigma/\sigma_y=\sigma^*$ appear to be the best match to the reference. This improvement in performance may be attributed to the smoothing that occurs as a consequence of adding noise to the input $y$. Also note that, due to the limited quantity of training data (2,000 training samples), it is not possible to approximate the target measure too well. 

We quantify the performance of each method using the following two metrics: 
the batch Wasserstein-1 distance $\tilde{W}_1$, which is the same metric that is employed to choose $\sigma^*$, and the relative $L_2$ error between samples from the target and generated random variables. We define relative $L_2$ error as follows:
\begin{equation*}
   \text{Relative $L_2$ error} =  \int_{X\times Y} (\hat{p}_{XY}-\hat{p}^{g}_{XY})^2 \ud x \ud y \;\;\bigg/ \int_{X\times Y} (\hat{p}_{XY})^2 \ud x \ud y,
\end{equation*} 
where $\hat{p}_{XY}$ and $\hat{p}^g_{XY}$ denote the true and generated joint distributions, respectively. We estimate the relative $L_2$ errors using 10\textsuperscript{6} realizations.

Table \ref{table:toyproblem_errors} shows the minimum error achieved from four independent runs for each method and joint measure. For all three problems, cWGANs with Full-GP outperform cWGANs with Partial-GP on both metrics. Another observation that can be made from \Cref{table:toyproblem_errors} is that the joint distribution induced by cWGANs with Full-GP when $\sigma \neq 0$ have smaller $\tilde{W}_1$ distances as compared to the case where $\sigma = 0$, which is expected since this is the same metric that we use to choose $\sigma$, but the influence of $\sigma$ on the relative $L_2$ error is negligible.

\begin{table}[H]

  \caption{Error metrics between the true joint measure and the approximation obtained using different cWGANs for the illustrative examples}
  %Javier's caption --- Errors in the generated random variables.
  \label{table:toyproblem_errors}
  \centering
  \begin{tabular}{lcccccc}
    \toprule
     \multirow{2}{*}{Example}& \multicolumn{2}{c}{\textbf{Partial-GP}}  & \multicolumn{2}{c}{\textbf{Full-GP ($\sigma/\sigma_y=0$)}} & \multicolumn{2}{c}{\textbf{Full-GP ($\sigma/\sigma_y=\sigma^*$)}}  \\
     \cline{2-7}
     & $\tilde{W}_1$ dist. & rel. L2 error & $\tilde{W}_1$ dist. & rel. L2 error & $\tilde{W}_1$ dist. & rel. L2 error \\
    \midrule
    Tanh+$\Gamma$ & 0.0429 & 0.192 & 0.0427 & 0.131 & 0.0390 & 0.134\\
    %\hline
    Bimodal & 0.0509 & 0.201 & 0.0509 & 0.161 & 0.0471 & 0.163\\
    %\hline
    Swissroll & 0.0477 & 0.360 & 0.0483 & 0.284 & 0.0399 & 0.281\\
    \bottomrule
  \end{tabular}
\end{table}

\Cref{fig:toyprob_truevs_estW1} shows the variation in the $\tilde{W}_1$ distance between $\mu_{X|Y}(\cdot|\hat{y})$ and $\mu^g_{X|Y}(\cdot|\hat{y})$ (for the same instances of $\hat{y}$ as shown in Figure \ref{fig:toyprob_jointhist}) as the non-dimensional quantity $\sigma/\sigma_y$ is varied. The blue curve corresponds to Full-GP with $\sigma = 0$ and the red curve corresponds to Partial-GP. \Cref{fig:toyprob_truevs_estW1} shows that the change in the $\tilde{W}_1$ distance is negligible for $\sigma/\sigma_y < 0.1$, which may explain the limited influence of $\sigma$. Therefore, in a practical problem where the true condition density is not known, $\sigma$ need not be optimized as long as a small value is chosen. Further, we observe that the $\tilde{W}_1$ distance between $\mu_{X|Y}^g(\cdot|\hat{y})$ and $\mu_{X|Y}(\cdot|\hat{y})$ using the Partial-GP method is greater or similar to Full-GP for all problems at both instances of $\hat{y}$ considered here.

\begin{figure}[H]
\centering
\includegraphics[width=16.5cm]{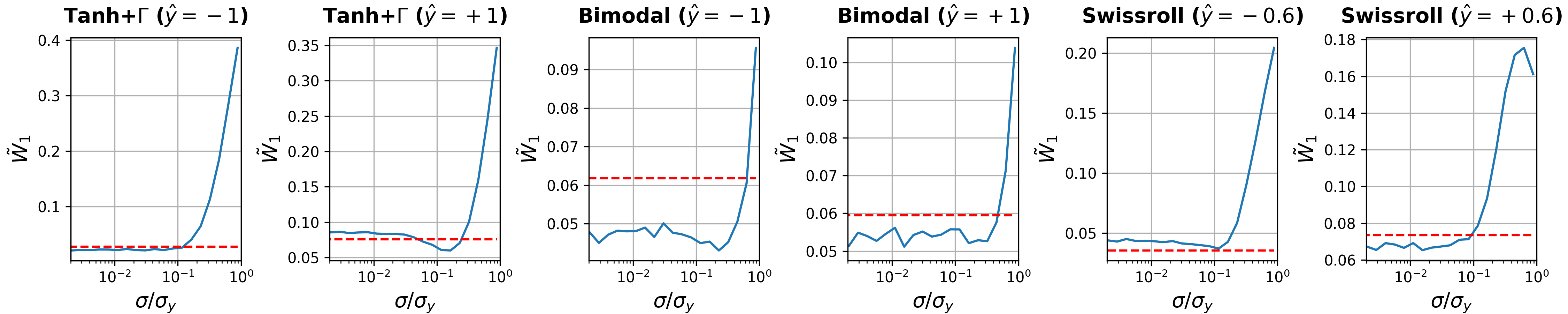}
\caption{Batch Wasserstein-1 distance, $\tilde{W}_1$, vs. $\sigma$ for cWGANs models with Full-GP. 
For reference, the $\tilde{W}_1$ distance for cWGANs with Partial-GP is shown using a dashed red line}
\label{fig:toyprob_truevs_estW1}
\end{figure}

\subsection{Inverse heat conduction} \label{sec:inv_heat_eq}
We consider the following two-dimensional heat conduction problem 
\begin{alignat}{2}
    \frac{\partial u(\s,t)}{\partial t} - \kappa \Delta u (\s,t)) &= 0, \qquad 
     &&\forall \ (\s,t) \in [0,2\pi]^2 \times (0, T)  \label{eqn:pde_t_heat} \\
    u (\s, 0) &= u_0(\s),  \qquad 
     &&\forall \ \s \in [0,2\pi]^2 \label{eqn:ic_t_heat} \\
    u (\s, t) &= 0, \qquad
     &&\forall \ (\s,t) \in \partial [0,2\pi]^2 \times (0, T) \label{eqn:bc_t_heat}
\end{alignat}
where $u$ denotes the temperature field, $\kappa$ denotes the conductivity field (which is set to 0.64), and $u_0$ denotes the initial temperature field. Given a noisy temperature field at final time $T=1$, we wish to infer the initial condition $u_0$. We remark that this inverse problem is very ill-posed problem due to the significant loss of information via diffusion \cite{engl2000}. 

For the Bayesian framework, we assume the following prior on the initial data
\begin{equation}\label{eqn:rect_prior}
u_0(\s) = \begin{cases}
2 +  2\frac{(s_1 - \xi_1)}{(\xi_3-\xi_1)}&\quad \text{if } s_1 \in [\xi_1,\xi_3], \ s_2 \in[\xi_4,\xi_2],\\
0 & \quad \text{otherwise}, 
\end{cases}
\end{equation}
with $\xi_1,\xi_4$ sampled from the uniform distribution on $[0.2,0.4]$, and $\xi_2,\xi_3$ sampled uniformly from $[0.6,0.8]$. The prior distribution defined by \Cref{eqn:rect_prior} corresponds to a linearly varying temperature field in a rectangular region of random shape/size with a zero background outside the rectangle. For the present inverse problem, we represent the discrete initial temperature field (to be inferred) on a $28 \times 28$ Cartesian grid by the random variable $\X$. The final temperature field is recovered by solving Eqs.~\eqref{eqn:pde_t_heat}, \eqref{eqn:ic_t_heat} and \eqref{eqn:bc_t_heat} using a central-space-backward-time finite difference scheme on the same gird. We then add noise sampled from $\mathcal{N}(\bm{0}, \bm{I})$ to generate a realization of the noisy measurement variable $\Y$. Figure \ref{fig:rect_dataset} depicts a few of the paired samples $(\x,\y)$ that form the dataset. 
\begin{figure}[t]
\centering
\includegraphics[width=0.8\linewidth]{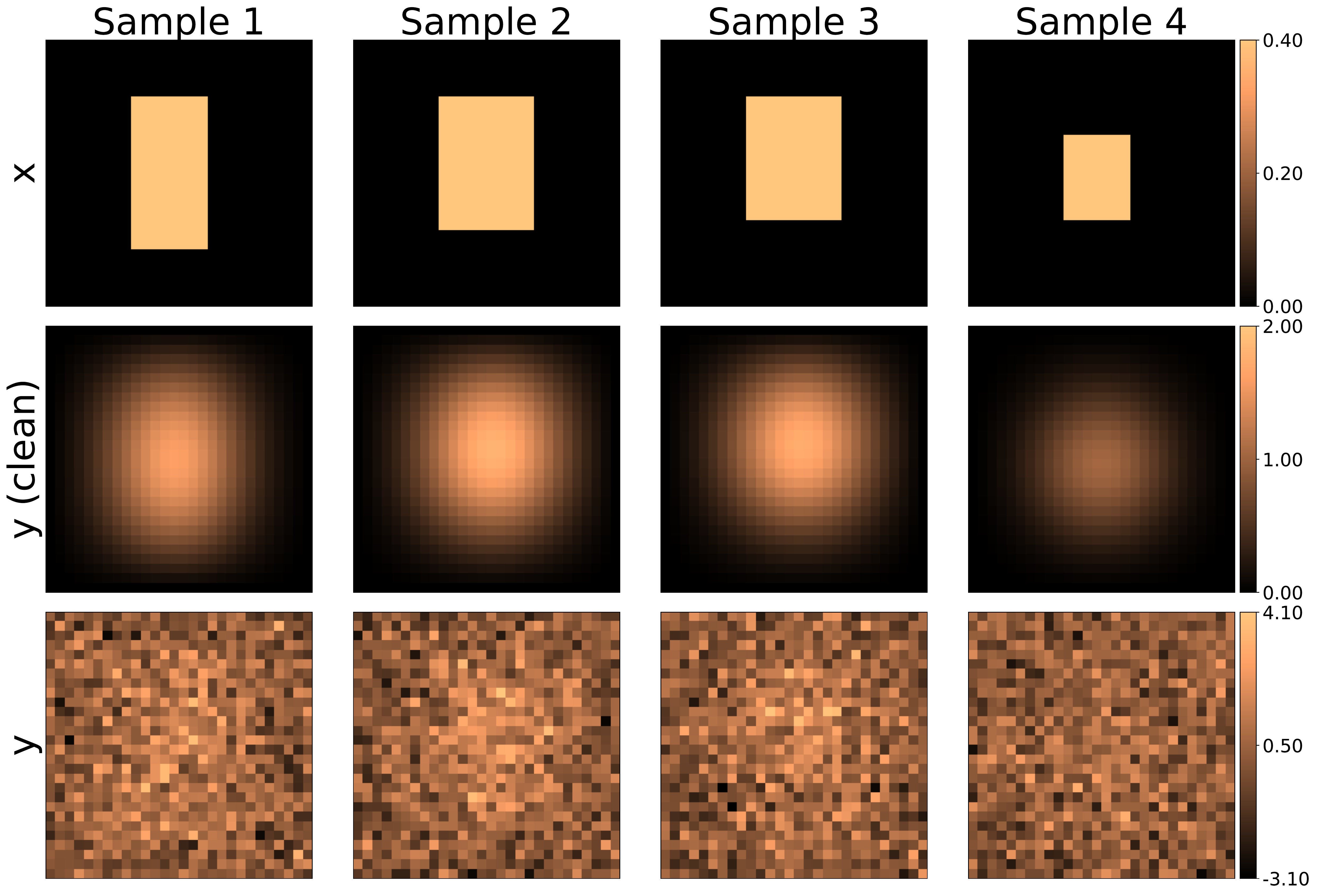}
\caption{Samples from the rectangular prior dataset for the inverse heat equation}
\label{fig:rect_dataset}
\end{figure}

\begin{figure}[t]
\centering
\includegraphics[width=0.5\linewidth]{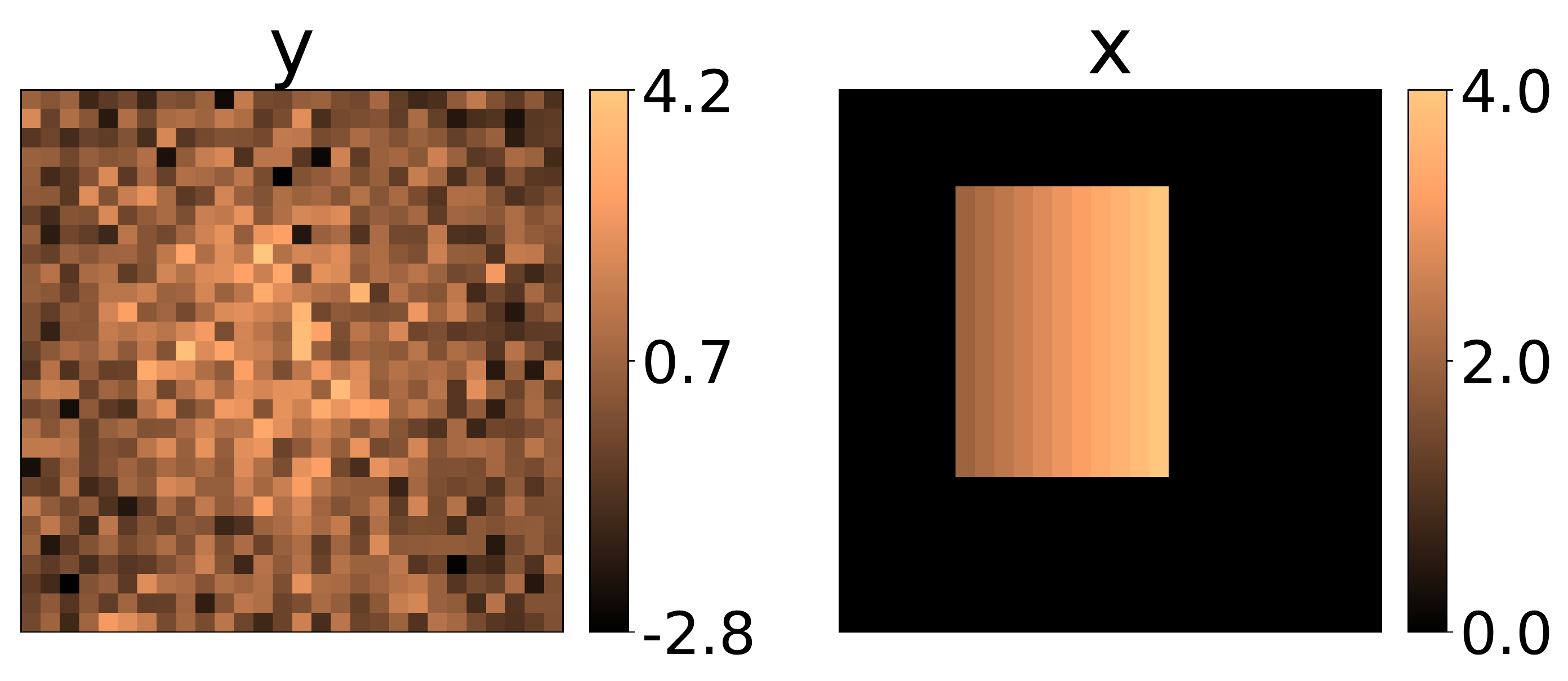}
\caption{Test $(\x,\y)$ sample for inverse heat equation}
\label{fig:test_sample_heat}
\end{figure}
\begin{figure}[t]
\centering
\includegraphics[width=0.8\linewidth]{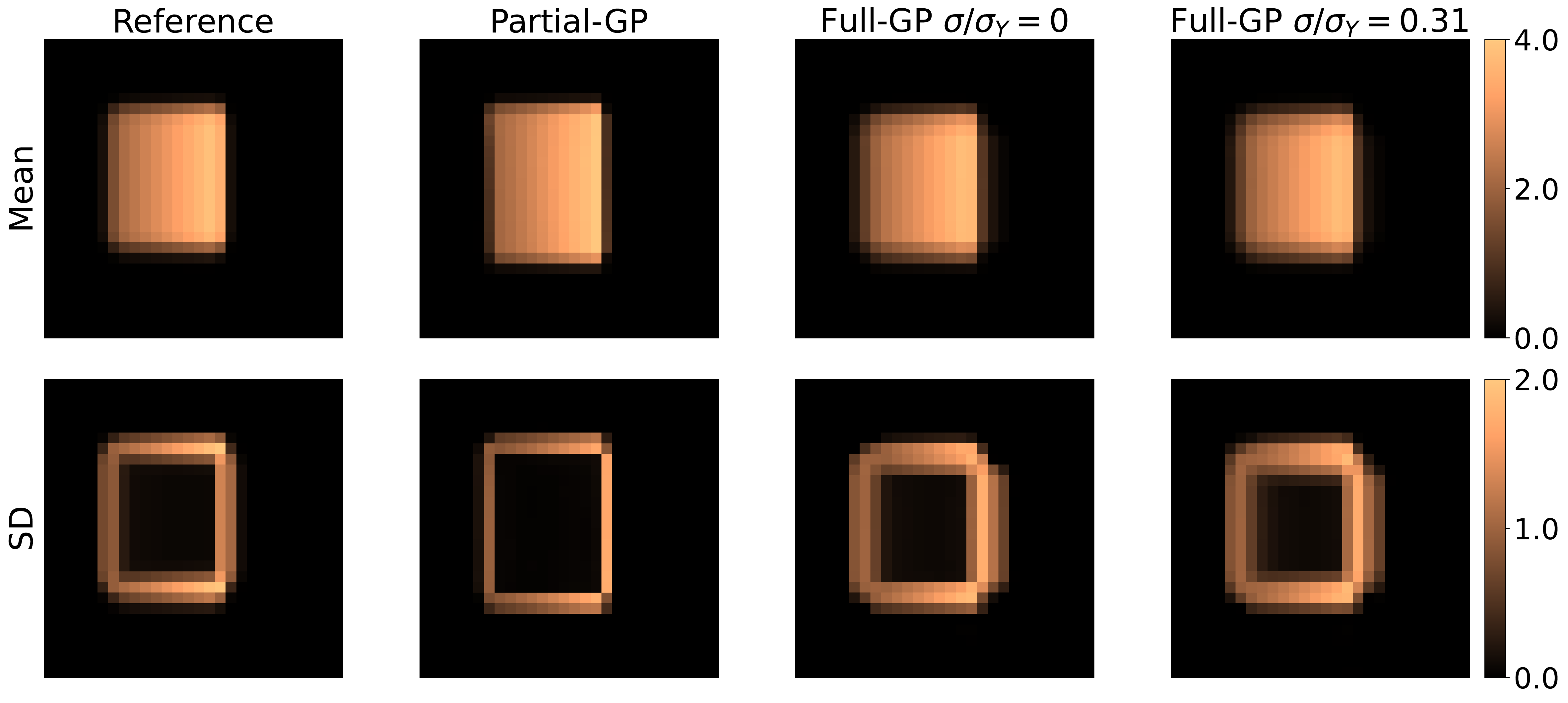}
\caption{Comparing mean and SD obtained with the Partial-GP and Full-GP models to the reference fields for the inverse heat equation}
\label{fig:compare_stats_heat}
\end{figure}
\begin{table}[b]
  \caption{$L^2$ error in posterior statistics obtained using different types of cWGANs for the inverse heat equation}  
  \label{tab:stat_errs_heat}
  \centering
  \begin{tabular}{lccc}
    \toprule
     \multirow{2}{*}{\textbf{\makecell{Posterior\\Statistics}}} & \multirow{2}{*}{\textbf{Partial-GP}} & \multicolumn{2}{c}{\textbf{Full-GP}} \\
     \cline{3-4}
     & & $\sigma/\sigma_Y=0$ & $\sigma/\sigma_Y=0.31$ \\
     \toprule
     Mean & 0.441 & 0.420 & 0.406\\
     SD   & 0.460 & 0.454 & 0.435\\
     % \textbf{Mean} & 4.41e-1 & 4.20e-1 & 4.06e-1\\
     % \textbf{SD}   & 4.60e-1 & 4.54e-1 & 4.35e-1\\
    \bottomrule
  \end{tabular}
\end{table}
Based on the experiments performed in \cite{ray2022}, we consider cWGAN models for the Partial-GP and Full-GP approaches with $\Nz = 3$. The details about the GAN architectures are available in Appendix \ref{app:archs_heat}. Both cWGANs are trained on a dataset with 10,000 training sample pairs. After a thorough sweep over the the gradient-penalty parameter $\lambda$, we found $\lambda=10$ for the Partial-GP approach and $\lambda=0.1$ for the Full-GP approach to be optimal. Both cWGANs are tested on an unseen pair shown in Figure \ref{fig:test_sample_heat}. Following \cite{ray2022}, the reference posterior pixel-wise mean and standard deviation (SD) of $\x$ given $\y$ for this test sample (depicted in the first column of Figure \ref{fig:compare_stats_heat}) is calculated using Monte Carlo sampling of the random variables $\xi_i$. As can be seen in Figure \ref{fig:compare_stats_heat} the pixel-wise statistics are captured better with the Full-GP approach compared to the Partial-GP approach. This is corroborated by the $L^2$ error of these statistics listed in Table \ref{tab:stat_errs_heat}, with the smallest errors observed for $\sigma/\sigma_Y=0.31$, where $\sigma$ is the sampling parameter for $\mu_{\Y_\sigma}$ and $\sigma_Y=1.02$ is the mean pixel-wise standard deviation over the $\y$ samples in the training set. Note that the $L^2$ error here is evaluated as
\[
\| \text{Mean}_\text{pred} - \text{Mean}_\text{ref} \|_2 = \frac{1}{\Nx \Ny} \sum_{i}^{\Nx} \sum_{j}^{\Ny} \left([\text{Mean}_\text{pred}]_{i,j} - [\text{Mean}_\text{ref}]_{i,j} \right)^2,
\]
and similarly for the standard deviation. We also perform a sweep over $\sigma$ to see how this sampling parameter influences the errors. As shown in Figure \ref{fig:sigma_heat}, the errors are comparable for $\sigma/\sigma_Y < 0.4$ while being smaller than those observed with the Partial-GP approach. As expected, the errors increase significantly for larger values of the sampling parameter.

% \begin{table}[H]
%   \caption{Errors with Adler approach.}
%   \label{tab:example}
%   \centering
%   \begin{tabular}{ccc}
%     \toprule
%      $\bm{\lambda}$& \textbf{$L^2$ err. in mean} &\textbf{$L^2$ err. in SD} \\
%     \toprule
%     1e-1 & 4.97e-1 & 4.58e-1\\
%     1.0 & 4.62e-1 & 4.55e-1 \\
%     1e1 & 4.41e-1& 4.60e-1\\
%     \bottomrule
%   \end{tabular}
% \end{table}

% \begin{table}[H]
%   \caption{Errors with Full approach.}
%   \label{tab:example}
%   \centering
%   \begin{tabular}{c|cccc|cccc}
%     \toprule
%      $\bm{\lambda}$& \multicolumn{4}{c}{\textbf{$L^2$ err. in mean}} & \multicolumn{4}{c}{\textbf{$L^2$ err. in SD}} \\
%      & $\delta=0$ & $\delta=0.001$ & $\delta=0.01$ & $\delta=0.1$
%      & $\delta=0$ & $\delta=0.001$ & $\delta=0.01$ & $\delta=0.1$\\
%     \toprule
%     1e-3 & 5.07e-1 & 5.02e-1& 4.98e-1& 4.98e-1 & 5.07e-1 & 5.10e-1 & 4.94e-1 & 5.03e-1\\
%     1e-3 & 4.63e-1 & 4.60e-1& 4.57e-1 & 4.57e-1 & 4.71e-1 & 4.63e-1 & 4.65e-1 & 4.71e-1\\
%     1e-1 & 4.20e-1 & 4.15e-1 & 4.20e-1& \textbf{4.06e-1}& 4.54e-1& 4.46e-1 & 4.52e-1 & \textbf{4.35e-1}\\
%     1.0 & 4.84e-1& 5.00e-1& 5.00e-1 & 4.86e-1 & 5.13e-1 & 5.29e-1& 5.27e-1 & 4.97e-1 \\
%     1e1 & 5.25e-1& 5.26e-1& 5.26e-1 & 5.06e-1 & 5.00e-1 & 5.08e-1& 5.04e-1 & 4.97e-1\\
%     \bottomrule
%   \end{tabular}
% \end{table}

\begin{figure}[t]
\centering
\includegraphics[width=\linewidth]{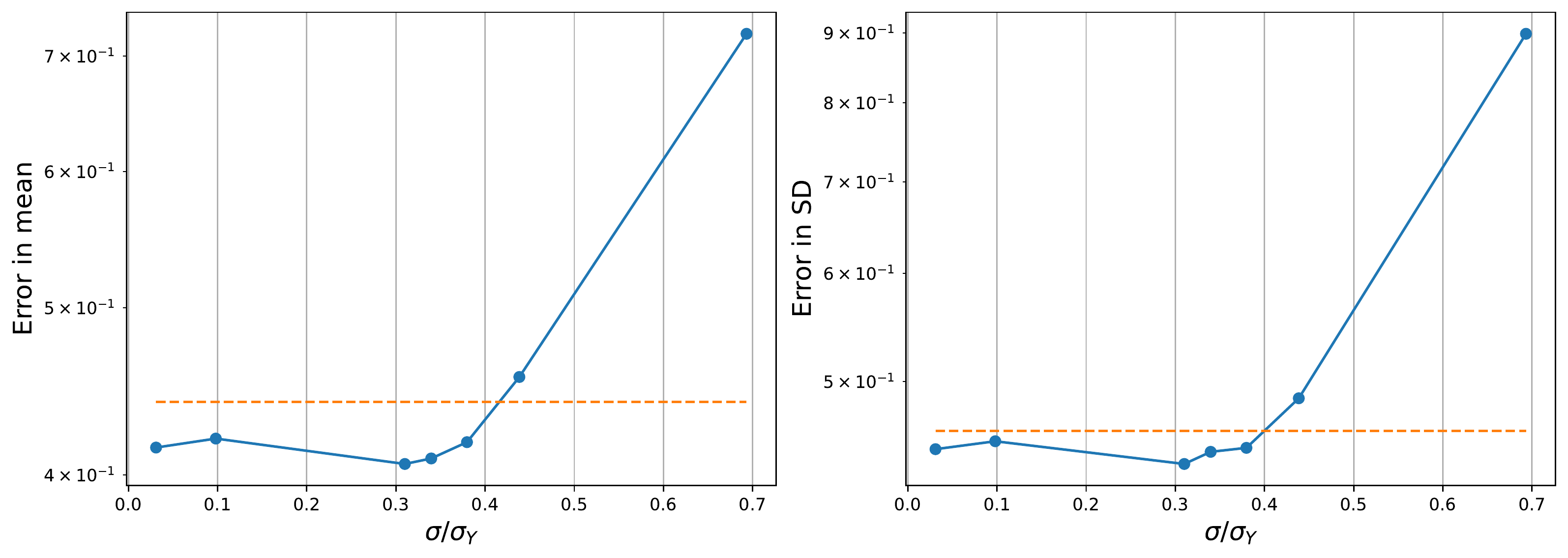}
\caption{$L^2$ errors in posterior mean and SD with Full-GP approach as $\sigma$ is varied. The dashed horizontal lines indicate the errors with the Partial-GP model}
\label{fig:sigma_heat}
\end{figure}

\subsection{Inverse Helmholtz equation}

Lastly, we use cWGANs with Full-GP to solve the inverse Helmholtz equation, which is extensively used to model the propagation of waves in elastography applications \cite{mclaughlin2006shear,zhang2012solution}. In such applications, wave fields are measured and used to infer the mechanical properties of the propagation medium. This relation is given by the Helmholtz equation:
\begin{equation}\label{eqn:pde_helmholtz} 
    -\omega^2 ( \hat{\mathbf{u}}_R + i\hat{\mathbf{u}}_I ) 
    - \nabla \!\cdot\! \Bigl(\frac{G}{\rho} (1+ i \alpha \omega) \nabla (\hat{\mathbf{u}}_R + i\hat{\mathbf{u}}_I ) \Bigl) = 0,
    \forall \ (\hat{\mathbf{u}}_R, \hat{\mathbf{u}}_I, G) \in [0,1]^2 \times [0,1]^2 \times [0,1]^2 ,
\end{equation}
where $\omega$ denotes the wave propagation frequency, $\hat{\mathbf{u}}_R$ and $\hat{\mathbf{u}}_I$ denote the real and imaginary components of the wave amplitude field at frequency $\omega$, $G$ denotes the shear modulus field, $\alpha$ is the wave dissipation coefficient and $\rho$ denotes the density. The physical domain of interest is 1.75 mm $\times$ 1.75 mm. However, we model a larger domain that includes the domain of interest to allow for wave dissipation and avoid reflections: the left edge is padded by 2.6 mm, 1.75 mm is added on the top and bottom edges, but the right edge is not padded. We impose the following boundary condition on the right boundary: $\hat{\bm{u}}_R = 0.02$ mm and $\hat{\bm{u}}_I = 0$. The boundary conditions on all other boundaries of the expanded domain are $\hat{\bm{u}}_R = \hat{\bm{u}}_I = 0$. Further, we assume that $\alpha = 5 \times 10^{-5}$ and $\rho = 1000 $ kg/m\textsuperscript{3} are constant over the entire physical domain of this problem. Now, the inverse problem we wish to solve here consists of inferring the shear modulus $G$ from the wave amplitude fields $\hat{\mathbf{u}}_R$ and $\hat{\mathbf{u}}_I$.

\begin{figure}[t]
\centering
\includegraphics[width=0.4\linewidth]{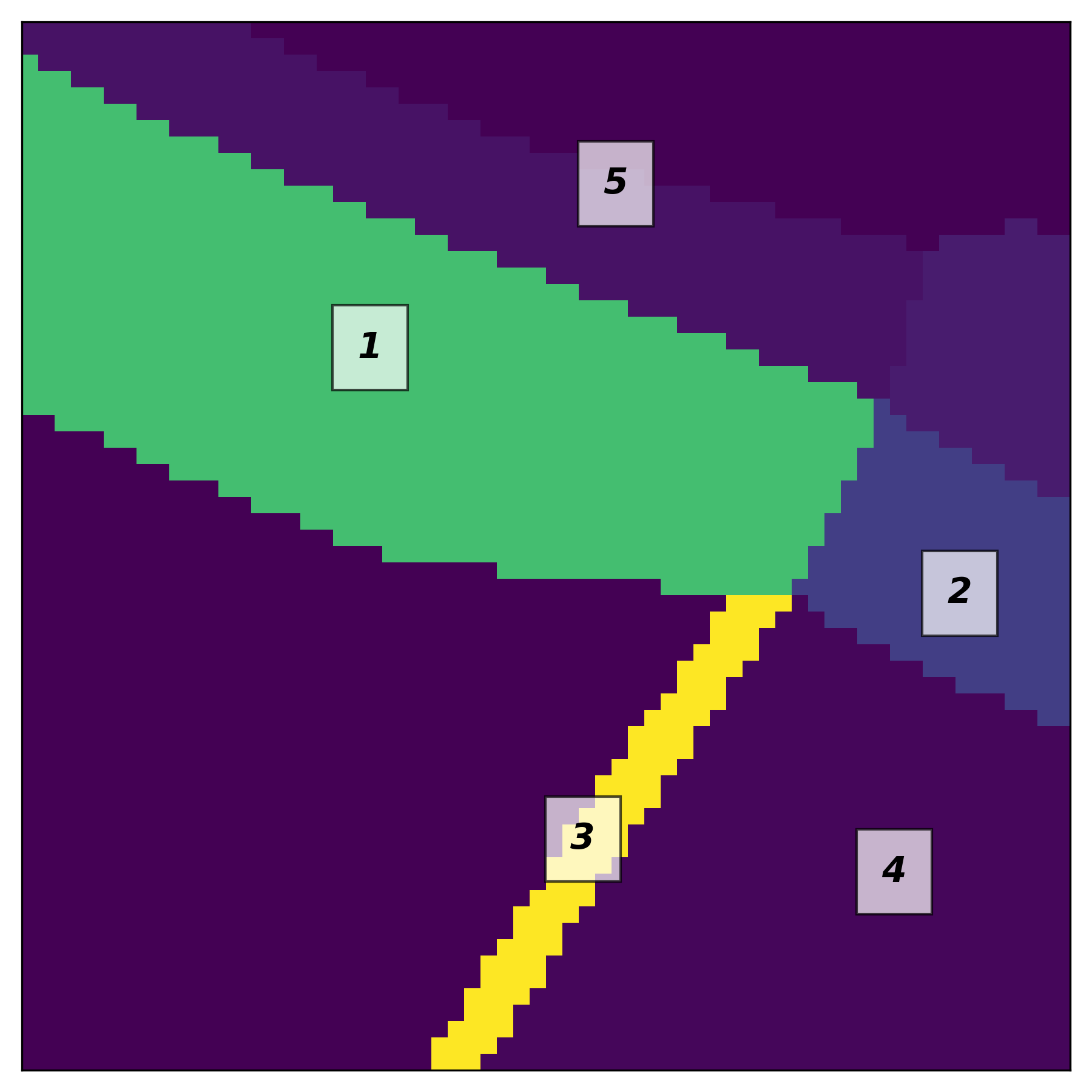}
%\caption{Sample geometry from the prior measure $\mu_{\X}$. The geometry represents the optic nerve head where (1) is the sclera, (2) is the lamina cribrosa, (3) is the pia matter, (4) is the optic nerve, and (5) is the retina.}
\caption{A realization of the optic nerve head (ONH) samples from the prior measure showing the (1) sclera, (2) lamina cribrosa, (3) pia matter, (4) optic nerve, and (5) retina}
\label{fig:app_ONH}
\end{figure}
In this particular example, we simulate the case when the tissue displacement within a human 
optic nerve head (ONH) is measured using ultrasound waves.  
\Cref{fig:app_ONH} shows the typical geometry of an ONH considered in this example and delineates its various parts. The prior distribution $\mu_\X$ controls the geometry of the optic nerve head (ONH) and its shear modulus field $G$. \Cref{tab:app_ONH_rand} lists the 16 random variables that make up the prior distribution; these are adapted from the literature \cite{sigal2009interactions, csahan2019evaluation, qian2021ultrasonic, zhang2020vivo}.
\begin{table}[t]
  %\caption{Random parameters used to define the prior measure, $\mu_\X$. The values shown are the raw values that are then normalized.}
  \caption{Random variables comprising the prior measure for the optic nerve head (ONH)} %The values shown are the raw values that are then normalized.}
  \label{tab:app_ONH_rand}
  \centering
  \begin{tabular}{lc}
    \toprule
     Parameter & Definition \\
     \toprule
     Width of lamina cribrosa (mm) & $\mathcal{U}(1.1,2.7)$ \\
     Thickness of lamina cribrosa (mm) & $\mathcal{U}(0.16,0.44)$ \\
     Radius of lamina cribrosa (mm) & $\mathcal{U}(1.0,5.0)$ \\
     Thickness of the sclera (mm) & $\mathcal{U}(0.45,1.15)$ \\
     Radius of the sclera and retina (mm) & $\mathcal{U}(1.0,5.0)$ \\
     Thickness of retina (mm) & $\mathcal{U}(0.20,0.40)$ \\
     Width of optic nerve (mm) & $\mathcal{U}(0.20,0.40)$ \\
     Radius of optic nerve (mm) & $\mathcal{U}(1.65,3.65)$ \\
     Thickness of pia matter (mm) & $\mathcal{U}(0.06,0.10)$ \\
     Optic nerve shear modulus (kPa) & $\mathcal{N}(9.8,3.34^2)^*$ \\
     Sclera shear modulus (kPa) & $\mathcal{N}(125,5^2)^*$ \\
     Pia matter shear modulus (kPa) & $\mathcal{N}(125,50^2)^*$ \\
     Retina shear modulus (kPa) & $\mathcal{N}(9.8,3.34^2)^*$ \\
     Lamina cribrosa shear modulus (kPa) & $\mathcal{N}(73.1,46.9^2)^*$ \\
     Background shear modulus (kPa) & $0.1$ \\
     Rotation of the geometry (rad) & $\mathcal{U}(-\pi/12,\pi/12)$ \\
    \bottomrule
    %\multicolumn{2}{c}{\small$*$Tails of normal distributions are cut at 2 standard deviations from mean} \\
    \multicolumn{2}{c}{\small$*$In this example, the normal distributions $\mathcal{N}(0, \varsigma^2)$ are truncated to have support between $(0, 2\varsigma]$} \\
    %\multicolumn{2}{c}{\small or zero (i.e., negative values are not allowed).}
  \end{tabular}
\end{table}

Similar to the previous example, we discretize the shear modulus field $G$ over a $64 \times 64$ Cartesian grid and denote it by $\X$. The real and imaginary wave amplitude fields, $\hat{\mathbf{u}}_R$ and $\hat{\mathbf{u}}_I$, respectively, are also represented on $64 \times 64$ Cartesian grids. We solve the Helmholtz equation in \Cref{eqn:pde_helmholtz} using FEniCS \cite{alnaes2015fenics} and add isotropic Gaussian noise to the measurements; the added noise has standard deviation equal to 4\% of the maximum of $\hat{\mathbf{u}}_R$ and $\hat{\mathbf{u}}_I$ across all training samples. We denote the noisy $\hat{\mathbf{u}}_R$ and $\hat{\mathbf{u}}_I$ as $\y_1$ and $\y_2$, respectively; %and define them as realizations of the random variable $\Y$.
the random variable $\Y$ consists of $\y_1$ and $\y_2$. Finally, we construct a training dataset that consists of 12,000 samples of $\x$ and $\y$ pairs; three such pairs are shown in \Cref{fig:invhelm_trainsamples}. Reference posterior statistics, which allow for the fair comparison of cWGANs, are not available for this problem, so we perform a qualitative study of the performance of the cWGAN with Full-GP. We use the same hyper-parameters for the cWGANs in this example as the inverse heat conduction example; see \labelcref{app:archs} for more details. 

\begin{figure}[H]
\centering
\includegraphics[width=\linewidth]{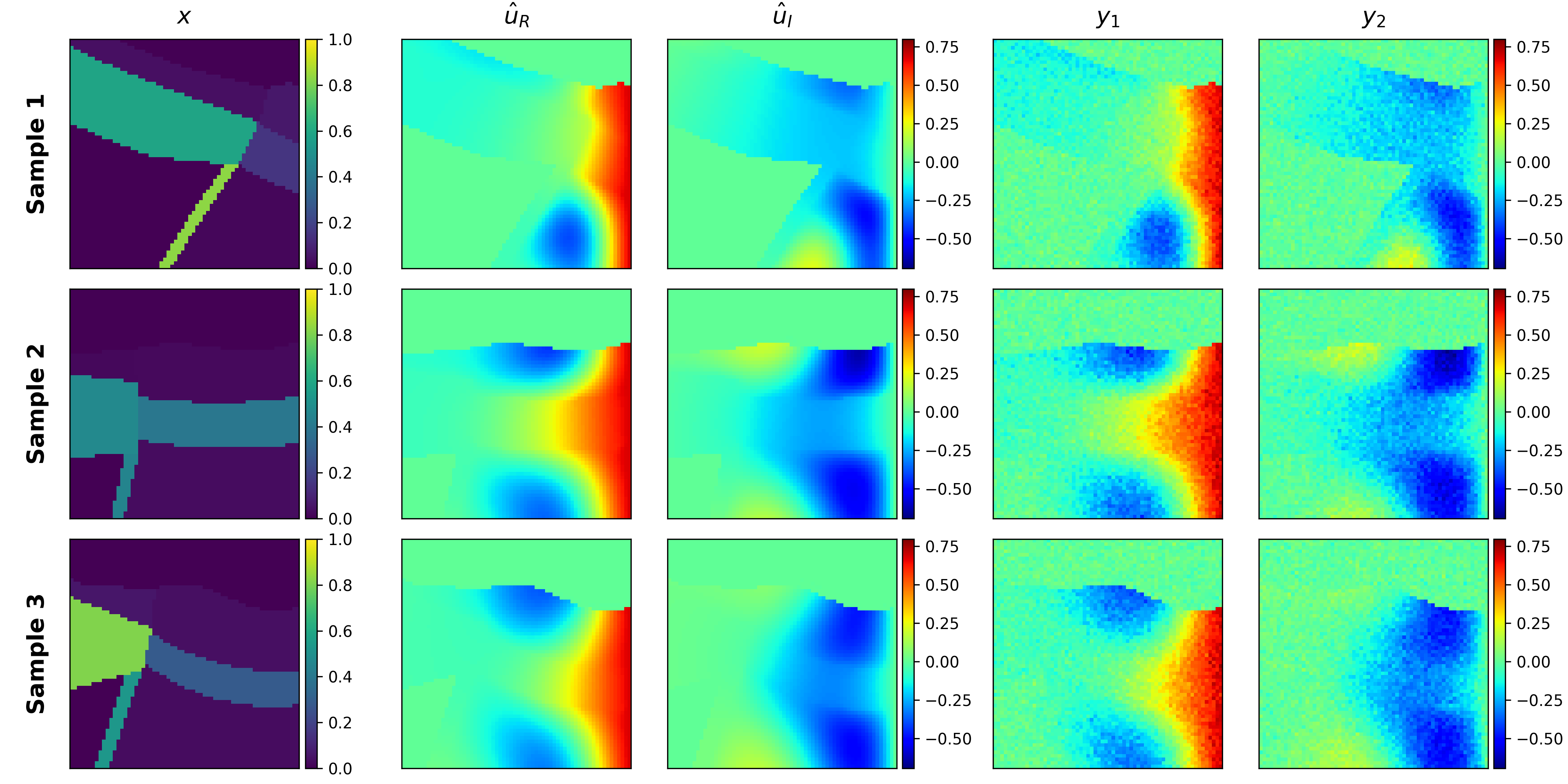}
%\caption{Samples from the prior data set sampled from $\X$ (column 1) and their corresponding clean (noise-free, columns 2 and 3) and noisy samples (columns 4 and 5), given by $ \y=\mathcal{F}(\x)$ (adding noise sampled from $\mathcal{N}(0,\mathbf{I}\epsilon)$ for the noisy samples). Note that $\y_1$ and $\y_2$ refer to the two components of $\y$.}
% \caption{Realizations from the prior distribution of $\X$ for the ONH shear modulus field (column 1), and corresponding noise free (columns 2 and 3) and noisy (columns 4 and 5) realizations of $\y$. Note that $\y_1$ and $\y_2$ refer to $\hat{\bm{u}}_R$ and $\hat{\bm{u}}_I$, respectively} 
\caption{Realizations from the prior distribution of $\X$ for the ONH shear modulus field (column 1), and corresponding wave amplitudes (columns 2 and 3) and realizations of $\Y$ (columns 4 and 5)} 
\label{fig:invhelm_trainsamples}
\end{figure}

Figure \ref{fig:invhelm_predictions} shows the predictions of the cWGAN for the three test samples shown in Figure \ref{fig:invhelm_trainsamples}. For each test sample, columns 1 and 2 show the noisy $\hat{\bm{u}}_R$ and $\hat{\bm{u}}_I$ measurements, respectively; column 3 shows the ground truth shear modulus $\x$; columns 4 and 5 show the corresponding posterior mean; and, columns 6 and 7 show the posterior standard deviation. Results are reported for two values of the sampling parameter for $\mu_{\Y_{\sigma}}$: $\sigma = 0$ (columns 4 and 6 of \Cref{fig:invhelm_predictions}) and $\sigma/\sigma_Y = 0.31$ (columns 5 and 7 of \Cref{fig:invhelm_predictions}); the latter value of $\sigma/\sigma_y$ is reused from the inverse heat conduction problem and $\sigma_y=0.105$ is the mean pixel-wise standard deviation over the $\y$ samples in the training set. The posterior statistics are estimated using $3,000$ samples. The posterior mean estimated using cWGANs with Full-GP is similar to the ground truth for all three test samples and for both values of $\sigma$ considered here. Moreover, we observe large standard deviations in the posterior distribution in areas where we expect the predictions to be uncertain, such as the edges of the ONH. However, the predicted posterior standard deviation when $\sigma/\sigma_Y = 0.31$ is comparatively larger than the predicted posterior standard deviation when $\sigma = 0$. This may be attributed to the additional sampling variance injected into $\y$ when $\sigma/\sigma_Y = 0.31$.

\begin{figure}[H]
\centering
\includegraphics[width=\linewidth]{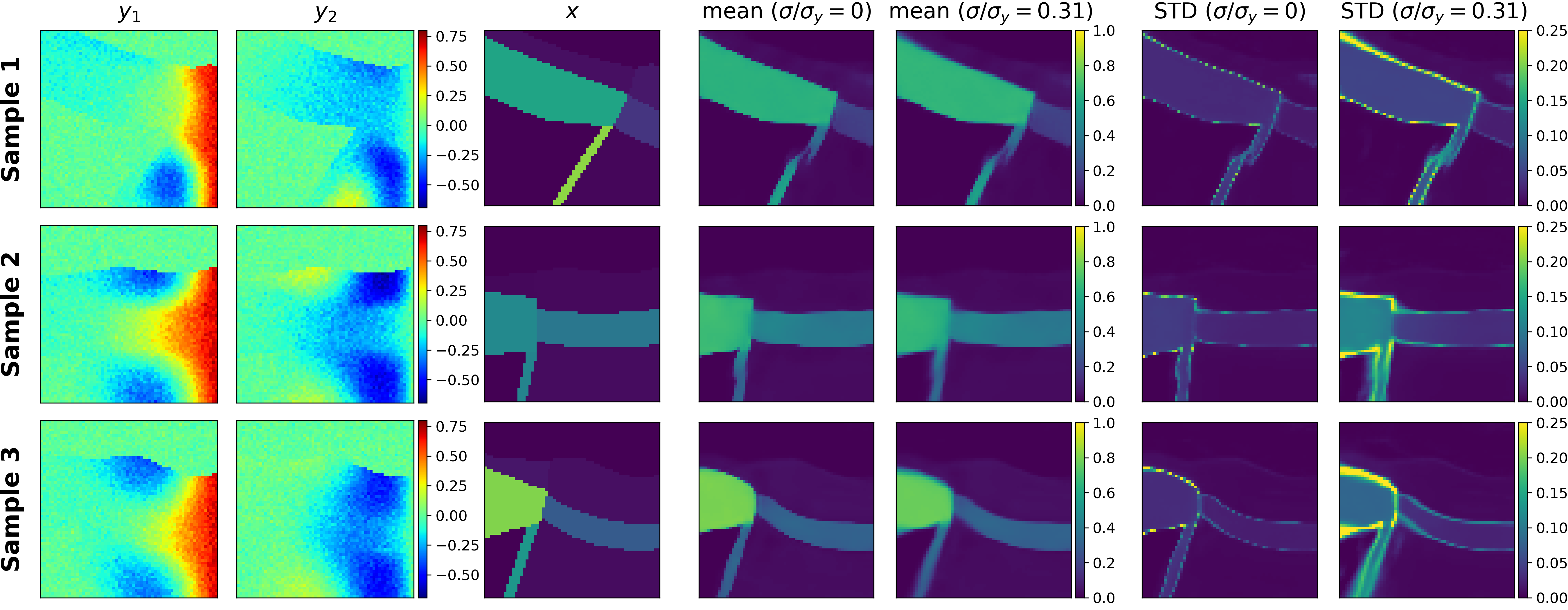}
\caption{Predictions from the cWGAN with Full GP. Columns 1 \& 2 are the measurements $\y$ and column 3 is  the true shear modulus $\x$; columns 4 and 6 show the component-wise mean and standard deviation, respectively, of $3,000$ samples obtained from Full-GP cWGAN with $\sigma/\sigma_y = 0$; columns 5 and 7 are the corresponding results obtained with $\sigma/\sigma_y = 0.31$}
\label{fig:invhelm_predictions}
\end{figure}

%\Agni{In \Cref{fig:invhelm_trainsamples,fig:invhelm_predictions} should we relabel $y_1$ and $y_2$ as $\hat{\bm{u}}_R$ and $\hat{\bm{u}}_I$. In fact, we may even be able to simplify the notation by choosing to denote the real and imaginary wave amplitude components as $\y_\mathrm{R}$ and $\y_\mathrm{I}$ in \Cref{eqn:pde_helmholtz} and successive plots. Thoughts?} \jme{}{I think we should keep them as they are, as $y_1=\hat{u}_R + noise$ and $y_2=\hat{u}_I + noise$. I have modified the text defining the random variable $\Y$, feel free to modify.}

\section{Conclusions and Outlook}\label{sec:conclusion}
In this manuscript a novel version of the cWGAN is developed for solving probabilistic inverse problems where the forward operator is constrained by a physics-based model. The approach is particularly useful for problems where the prior information for the inferred vector is known through samples, and the forward operator is known only as a black-box.  The cWGAN developed in this work differs from earlier versions in that its critic is required to be 1-Lipschitz with respect to both the inferred and the measurement vectors and not just the former. This leads to a loss term with a gradient penalty for the critic which is computed using all of its arguments. This simple change has a significant impact on the cWGAN. It allows us to prove that the conditional distribution learned by the cWGAN weakly converges to the true conditional distribution. It also leads to a new sampling strategy, wherein the output of the generator is computed for measurements sampled from a Gaussian distribution tightly centered around the true measurement. Through numerical examples it is demonstrated that the new cWGAN is more accurate than its predecessor, and the that new sampling strategy helps in further improving its performance. The application of this approach to a challenging inverse problem that is by applications motivated in biomechanics is also described.  

The ideas described in this manuscript may be extended in several interesting directions. These include, (i) their application to more challenging and complex inference problems; (ii) their application to probabilistic operator networks \cite{lu2021learning, li2020fourier, patel2022variationally} that map functions of one class to another (initial state to final state, or vice-versa); (iii) the use of more complex network architectures like transformers and self-attention networks~\cite{vaswani2017attention} in the generator and critic networks of the cWGAN.

\section{Acknowledgments}

The authors acknowledge support from ARO, USA grant W911NF2010050 and from the Airbus Institute for Engineering Research at USC. The authors acknowledge the Center for Advanced Research Computing (CARC) at the University of Southern California, USA for providing computing resources that have contributed to the research results reported within this publication.

%% BIBLIOGRAPHY
\bibliographystyle{elsarticle-num-names} 
\bibliography{references}

%% APPENDIX 
\appendix
\setcounter{figure}{0}
\setcounter{table}{0}
\renewcommand{\thesubsection}{\Alph{section}\arabic{subsection}}

\section{GAN architectures and experimental setup}\label{app:archs}

The generator and critic architectures for the experiments considered in \Cref{sec:results} are described in this section. Other key hyper-parameters are listed in Table \ref{tab:hparams}. All networks were trained using Adam optimizer with parameters $\beta_1 = 0.5$, $\beta_2=0.9$.

\begin{table}[!htbp]
\renewcommand{\arraystretch}{1.5}
\centering
\caption{Hyper-parameters associated with the cWGANs for various numerical examples}
\label{tab:app_hyperparameters}
\begin{tabular}{c c c c c c}
\toprule
\multirow{3}{*}{Hyperparameter}& \multicolumn{5}{c}{Experiment} \\
\cline{2-6}
 & \multicolumn{3}{c}{Illustrative examples} & \multirow{2}{*}{\makecell{Inverse heat\\conduction}} & \multirow{2}{*}{\makecell{Inverse Helmholtz\\equation}} \\
\cmidrule{2-4}
 &   Tanh+$\Gamma$ &   Bimodal  & Swissroll & &\\ 
\midrule
Training samples & 2,000 & 2,000 & 2,000 & 10,000 & 12,000 \\
$\Nx$ & 1 & 1 & 1 & $28\times28$ & $64\times64$ \\
$\Ny$ & 1 & 1 & 1 & $28\times28$ & $64\times64$ \\
$\Nz$ & 1 & 1 & 1 & 3 & 50 \\
Batch size & 2,000 & 2,000 & 2,000 & 50 & 50\\
Activation func & tanh & tanh & tanh & LReLU(0.1) & ELU \\
$N_\text{max}$ & 20 & 20 & 20 & 4 & 4\\
Max epochs & 600,000 & 600,000 & 600,000 & 500 & 4,000 \\
Learning rate & $10^{-4}$ & $10^{-4}$ & $10^{-4}$ & $10^{-3}$ & $10^{-3}$ \\
\bottomrule
\end{tabular}\label{tab:hparams}
\end{table}

\subsection{Illustrative examples}\label{app:archs_2Dexp}

The generator and critic architectures chosen to solve this problem are multilayer perceptrons (MLPs) with 4 hidden layers with $128-256-64-32$ units, respectively. 

% \Agni{I am not sure we need to report the following.}
% A grid search was done to find the optimal hyperparameters. The hyperparameters considered included:
% \begin{itemize}
%     \item Activation functions: $\mathrm{tanh}$ and ReLU.
%     \item Learning rates: $10^{-3}$, $10^{-4}$ and $10^{-5}$.
%     \item $N_\text{MAX}$: $4$ and $20$.
%     \item Gradient penalty weight, $\lambda$: $10^{-1}$, $10^{-2}$, $10^{-3}$ and $10^{-4}$.
% \end{itemize}

\subsection{Inverse heat conduction}\label{app:archs_heat}
The generator and critic architectures are identical to those considered for the inverse heat conduction problem in \cite{ray2022}. In particular, a residual block based U-Net architecture is used for the generator, with the latent information injected at every level of the U-Net through conditional instance normalization. The critic comprises residual block based convolution layers followed by a couple of fully connected layers. See \cite{ray2022} for precise details of the architectures, and the benefits of using conditional instance normalization, especially when the generator input $\y$ is an image while the latent variable is a vector.

\subsection{Inverse Helmholtz problem}\label{app:archs_invhelm}

For this problem, we used the same generator and critic architectures as for the inverse heat conduction problem. The only differences are regarding the different size of the inputs and using the ELU activation function instead of LReLU(0.1). We used the same hyper-parameters as in the inverse heat equation problem (see \Cref{tab:app_hyperparameters}). 

%\corrAgni{The prior in this problem consists of samples of the shear modulus, $G$, of the optic nerve head (ONH) represented in a}{} $\corrAgni{64\times64}{}$ \corrAgni{grid. These are sampled from the random variable}{} $\corrAgni{\X}{}$, \corrAgni{with probability measure}{} $\corrAgni{\mu_\X}{}$. \corrAgni{The prior measure,}{} $\corrAgni{\mu_\X}{}$\corrAgni{, is defined using 16 random parameters (see Table \ref{tab:app_ONH_rand}), which provide a parametric definition of the ONH geometry and the shear moduli. \Cref{fig:app_ONH} shows a sample geometry of the ONH and it describes each part of the geometry for clarification.}{}

%% Uncomment the following lines if supplementary information needs to be written

%\newpage
%\setcounter{section}{0}
%\renewcommand{\appendixname}{Supplementary Information}
%%\renewcommand{\thesection}{Supplementary Information~\arabic{section}}
%\renewcommand{\thesection}{Supplementary Information}
%%\renewcommand{\thesubsection}{S\arabic{section}.\arabic{subsection}}
%\renewcommand{\thesubsection}{S\arabic{subsection}}
%\renewcommand{\thefigure}{S\arabic{figure}}
%\setcounter{figure}{0}
%\renewcommand{\thetable}{S\arabic{table}}
%\setcounter{table}{0}

\end{document}